\ifdefined\pdfoutput
\pdfoutput=1
\fi

\documentclass[conference]{IEEEtran}
\IEEEoverridecommandlockouts

\usepackage{enumitem}

\usepackage[utf8]{inputenc} 
\usepackage[T1]{fontenc}    
\usepackage[scaled=0.85]{beramono}
\usepackage{hyperref}       
\usepackage{algorithm}
\usepackage{algorithmic}
\usepackage{tikz}
\usepackage{xcolor}

\usetikzlibrary{shapes.geometric}
\usepackage{array}
\usepackage[utf8]{inputenc} 
\usepackage[T1]{fontenc}    
\usepackage{url}            
\usepackage{booktabs}       
\usepackage{amsfonts}       
\usepackage{nicefrac}       
\usepackage{microtype}      
\usepackage{xcolor}         
\usepackage{graphicx}
\usepackage{amsmath}
\usepackage{amsthm}
\usepackage{bm}
\usepackage{algorithm}
\usepackage{algorithmic}
\usepackage{subfigure}
\usepackage{array}
\usepackage{xcolor,pifont}
\usepackage{tikz}
\usetikzlibrary{shapes.geometric}
\usetikzlibrary{positioning}
\usepackage{booktabs}
\usepackage{siunitx}
\usepackage{enumitem}


\newcommand{\nop}[1]{}

\renewcommand\thetable{\arabic{table}}
\newcommand{\partitle}[1]{\medskip \noindent \textbf{#1.}}
\newcolumntype{Y}{>{\centering\arraybackslash}X}
\usepackage{array}
\newcolumntype{C}[1]{>{\centering\arraybackslash}p{#1}}

\definecolor{cycle2}{RGB}{106, 191, 0}
\definecolor{cycle3}{RGB}{191, 0, 0}
\definecolor{amber}{rgb}{1.0, 0.75, 0.0}

\newcommand{\cmark}{\textcolor{cycle2}{\ding{52}}} %
\newcommand{\xmark}{\textcolor{cycle3}{\ding{56}}}
\makeatletter
\renewcommand\thetable{\@arabic\c@table}
\makeatother
\usepackage{booktabs}
\usepackage{tabularx}
\usepackage{makecell}
\newtheorem{theorem}{Theorem}[section]

\newtheorem{definition}[theorem]{Definition}

\newtheorem{remark}[theorem]{Remark}

\usepackage{amsmath,amsfonts,bm}

\def\eqref#1{equation~\ref{#1}}

\def\1{\bm{1}}

\def\re{{\textnormal{e}}}

\def\rve{{\mathbf{e}}}

\def\rvx{{\mathbf{x}}}

\def\vd{{\bm{d}}}

\def\vx{{\bm{x}}}

\DeclareMathAlphabet{\mathsfit}{\encodingdefault}{\sfdefault}{m}{sl}
\SetMathAlphabet{\mathsfit}{bold}{\encodingdefault}{\sfdefault}{bx}{n}

\def\sP{{\mathbb{P}}}

\def\sV{{\mathbb{V}}}

\begin{document}

\title{Feature Attribution in Directed Acyclic Graphs Using\\ Edge Intervention}

\author{
\IEEEauthorblockN{
Qiheng Sun\textsuperscript{1,2},
Junxu Liu\textsuperscript{1},
Xiaokai Mao\textsuperscript{2},
Haocheng Xia\textsuperscript{3},
Jinfei Liu\textsuperscript{2},
Kui Ren\textsuperscript{2},
Haibo Hu\textsuperscript{1}
}
\IEEEauthorblockA{
\textsuperscript{1}The Hong Kong Polytechnic University, Hong Kong, China\\
\textsuperscript{2}Zhejiang University, Hangzhou, China\\
\textsuperscript{3}University of Illinois Urbana-Champaign, Urbana, USA\\
}
\IEEEauthorblockA{
\{qiheng.sun, junxu.liu, haibo.hu\}@polyu.edu.hk\\
\{xiaokaimao, jinfeiliu, kuiren\}@zju.edu.cn, hxia7@illinois.edu
}
}

\maketitle

\begin{abstract}
Shapley value-based feature attribution methods face challenges in scenarios involving complex feature interactions and causal relationships, even when a causal structure is provided. 
Existing methods typically adopt a node-centric view, attributing importance solely to individual features.
Consequently, they often fail to simultaneously capture the externality and exogenous influence of features, leading to unreasonable interpretations. To overcome these limitations, we propose a novel feature attribution method called DAG-SHAP, which is based on edge intervention. DAG-SHAP treats each feature edge as an individual attribution object, ensuring that both externality and exogenous contributions of features are appropriately captured. Additionally, we introduce an approximation method for efficiently computing DAG-SHAP. Extensive experiments on both real and synthetic datasets validate the effectiveness of DAG-SHAP. Our code is available at \url{https://github.com/ZJU-DIVER/DAG-SHAP}.
\end{abstract}

\begin{IEEEkeywords}
Feature Attribution; Shapley Value; Directed Acyclic Graph
\end{IEEEkeywords}

\section{Introduction}
The growing complexity of machine learning (ML) models in real-world applications has fueled intense interest in interpretability of decision-making processes~\cite{DBLP:conf/icml/0005AW25, DBLP:conf/icml/KasmiBFP25, pmlr-v235-chen24d,pmlr-v235-machiraju24a, dominici2025causal}. Feature attribution, which aims to quantify the marginal contribution of each input feature with real-valued scores, is recognized as a leading approach for interpreting predictive outcomes in terms of their underlying explanatory variables. Within the broad landscape of attribution strategies~\cite{sundararajan2017axiomatic}, Shapley value-based methods have been extensively studied due to their rigorous foundation in cooperative game theory, ensuring fairness in contribution allocation while upholding desirable properties such as efficiency, symmetry, redundancy, and additivity~\cite{chen2022explaining, winter2002shapley}. Applications of these methods in domains such as healthcare and finance demonstrate their utility in uncovering the most influential features behind predictive outcomes~\cite{mohanty2024leveraging, kutlu2024machine,bussmann2021explainable,gramegna2021shap}.


Although widely investigated, Shapley value-based attribution methods remain limited in their ability to capture feature interactions that accurately reflect intricate data dependencies. As one of the earliest attempts, the off-manifold Shapley value~\cite{scott2017unified} relies on the assumption of feature independence, and therefore fails to capture the complex dependencies present in practical scenarios.
The on-manifold Shapley value~\cite{scott2017unified,sundararajan2020many} mitigates this limitation by imputing excluded features with conditional expectations derived from their correlations with included features when assessing subset utility. However, reliance on correlations alone may misalign with the underlying data generation process and lead to causal reversal~\cite{janzing2020feature,jung2022measuring}, particularly when causal dependencies exist among features.
To address this, studies such as asymmetric Shapley values~\cite{frye2020asymmetric} and causal Shapley values~\cite{DBLP:conf/nips/HeskesSBC20} further extend the framework by incorporating sequential and causal relationships, respectively.
In addition, recent advances characterize causal dependencies with a directed acyclic graph (DAG), where vertices represent features and edges capture direct causal effects. This makes it important to reason about edges and paths in the graph~\cite{DBLP:journals/pvldb/YanCLN14,DBLP:journals/pvldb/GuoXYYKZJ24}.
Shapley Flow~\cite{DBLP:conf/aistats/WangWL21} attributes contributions to entire pathways between source and target vertices rather than to individual causal transmissions along edges. Recursive Shapley value~\cite{DBLP:conf/icml/SingalMN21} attributes contributions following a top-down principle by first attributing to ``source'' vertices and then flowing them down the DAG. 


\begin{figure}[h]
\centering
    \begin{tikzpicture}[node distance=2cm, auto]

  \node (X1) [circle, draw, thick] {$X_1$};
  \node (X2) [circle, draw, thick, right of=X1] {$X_2$};
  \node (Y)  [circle, draw, thick, right of=X2] {$Y$};
  
  \draw[->, thick] (X1) to node[midway, above] {$\rve_1$} (X2);
  \draw[->,thick] (X2) to node[midway, above] {$\rve_3$}(Y);
  \draw[->, thick, bend left=45] (X1) to node[midway, above] {$\rve_2$} (Y);
  
  \node [right of=Y, node distance=2cm, yshift = 0.4cm] (formulas) 
    {$
    \begin{aligned}
    X_1 &= \rvx_1 \\
    X_2 &= X_1 + \rvx_2 \\
    Y_{\,\,\,} &=  f(X_1 , X_2)
    \end{aligned}
    $};
\end{tikzpicture}

\caption{An illustration of DAG with feature set $\{X_1,X_2\}$ and the model output $Y=f(X_1,X_2)$. Each directed edge indicates the direct causal effect, and $\rvx_1,\rvx_2$ represent the exogenous variables of $X_1,X_2$, respectively.
}

\label{fig:toy_example}
\end{figure}

Despite the aforementioned significant efforts to enhance feature interactions in the attribution process, \textit{externality} and \textit{exogeneity} remain inadequately addressed in existing approaches.
Attribution methods based on feature vertices adopting asymmetry samplings, such as asymmetry Shapley value~\cite{frye2020asymmetric} and asymmetry causal Shapley value~\cite{DBLP:conf/nips/HeskesSBC20}, fail to make attribution results satisfy externality. When using the asymmetrical ordering of feature attribution methods (the parent vertex must appear before the child vertex), the marginal contribution of the parent vertex cannot receive externality gains from cooperating with the child vertex. This approach clearly violates the fundamental idea that each player should, to the greatest extent possible, obtain marginal contributions from the cooperation of others~\cite{roth1988shapley}. As illustrated in Figure~\ref{fig:toy_example}, the direct influence $\rve_2: X_1 \rightarrow Y$ cannot receive a marginal contribution from cooperating with $X_2$ which is $\rve_3: X_2 \rightarrow Y$,  even though there is no causal ordering between these two incoming edges into \(Y\). In addition, the attribution methods focusing on attributing contributions at each graph cut like Shapley Flow~\cite{DBLP:conf/aistats/WangWL21} and Recursive Shapley value~\cite{DBLP:conf/icml/SingalMN21}, fail to recognize exogenous contributions of features. Exogenous contribution refers to the part of the contribution in each feature that is not influenced by other features in the explained input. Shapley Flow and Recursive Shapley value assume that only feature vertices without incoming edges have exogenous contributions. Hence, the contribution of exogenous variable $\rvx_2$ cannot be properly captured. Apparently, the assumption that intermediate vertices have no endogenous contributions is misaligned with real-world feature attribution scenarios. 


In this paper, we explore the potential to enhance the reasonableness of attribution methods through an advanced investigation of feature interactions. Specifically, we incorporate fine-grained causal relationships under the assumption that features form a DAG, a widely adopted and arguably fundamental assumption in causal modeling. Our contributions can be summarized as follows. 
\begin{enumerate}[align=left, leftmargin=0pt, labelindent=\parindent, listparindent=\parindent, labelwidth=0pt, itemindent=!, itemsep=0pt, topsep=0pt, parsep=0pt]
\item We propose DAG-SHAP, an edge-intervention-based attribution method that performs fine-grained, pathway-specific interventions on selected parent--child transmissions in a causal DAG, without perturbing other edges in the graph. 
\item We show that DAG-SHAP is the only method that simultaneously adheres to the four critical axiomatic properties required for feature attribution: causality, efficiency, externality, and exogeneity.
\item We introduce two efficient methods for DAG-SHAP computation: (i) an exact formulation via feature-distribution inference under edge interventions, and (ii) a Monte Carlo approximation over valid topological edge orderings, enabling the method to remain efficient and scalable in practice. 
\item Comprehensive experiments on both real and synthetic datasets demonstrate that DAG-SHAP achieves superior attribution performance compared to baseline methods. 
\end{enumerate}

\partitle{Organization} Section~\ref{sec:related_works} reviews the related work. Section~\ref{sec:preliminaries} describes the preliminaries. Section~\ref{sec:MotivationExample} illustrates our motivation with a toy example. Section~\ref{sec:method} presents the definition of DAG-SHAP. Section~\ref{sec:computation_DAG-SHAP} provides the algorithm to approximate DAG-SHAP. Section~\ref{sec:experiment} presents
the experiments. Finally, Section~\ref{sec:conclusion} draws a conclusion and discusses future directions.

\section{Related Work}\label{sec:related_works}
In this section, we discuss related work on classic feature attribution and causal feature attribution methods. 
\subsection{Classic Feature Attribution Techniques}
LIME (Local Interpretable Model-agnostic Explanations)\cite{ribeiro2016should} is a popular model explanation method which generates a local and interpretable model for each prediction made by a complex model. The advantage of LIME lies in its universality and simplicity, as it can be applied to any model and provides intuitive explanations of feature importance. Grad-CAM~\cite{selvaraju2017grad} highlights the areas of the image that contribute the most to the prediction of a specific class through visualization. Specifically, it uses the feature maps of the last convolutional layer and the gradient information regarding the prediction of a specific class to generate a Class Activation Map. DeepLIFT~\cite{shrikumar2017learning} explains the decision-making process of deep learning models through the differences in activation functions. SmoothGrad~\cite{smilkov2017smoothgrad} is a method designed to improve the interpretability of deep learning models, particularly for image classification tasks. It aims to enhance the quality of model prediction explanations by reducing the visual noise in gradient sensitivity maps. Integrated Gradients (IG)~\cite{sundararajan2017axiomatic} involves integrating the gradients of a model's output with respect to its inputs along the path from a baseline to the actual input data. Like Shapley-value-based attribution, Integrated Gradients has been proven to satisfy axiomatic properties including linearity, implementation invariance, sensitivity, and dummy, and it is independent of the model implementation. Layer-wise Relevance Propagation (LRP)~\cite{montavon2019layer} is a popular method for explaining neural networks. It explains the predictions of neural networks by propagating relevance scores from the output layer back to the input layer. MCI measures feature contribution based on the maximum marginal contribution a feature can bring, which cannot capture the right causal contribution. It gives some new properties like Super-efficiency and Sub-additivity~\cite{catav2020marginal}.
Beyond these model-explanation techniques, the database community has also used Shapley values to quantify contribution scores of data units in learning-oriented tasks~\cite{DBLP:journals/pvldb/JiaDWHGLZSS19,DBLP:journals/pvldb/XiaL0Q00023}. This line of work is conceptually related to our goal of fair contribution allocation, but differs in both attribution granularity and semantics. Existing database-oriented Shapley valuation methods typically regard records, data points, or datasets as players and measure their marginal effect on a downstream utility. In contrast, DAG-SHAP treats directed causal transmissions between observed features as attribution objects. Rather than asking how much a tuple or dataset contributes to a model, DAG-SHAP asks how much an intervened feature edge contributes to an instance-wise prediction under a causal DAG.

\subsection{Causal Feature Attribution Techniques}
As a measure of causal contribution, do-Shapley value~\cite{pmlr-v162-jung22a} provides a theoretical justification through axiomatic foundations. Like causal Shapley value~\cite{frye2020asymmetric}, it employs interventions but generalizes previous approaches to measure the causal contributions of each feature to a target effect induced by a black-box/unknown/inaccessible model. PWSHAP (Path-Wise Shapley effects)~\cite{ter2023pwshap} is a method for explaining the impact of specific binary variables, such as treatment effects or ethnicity in policy models, within predictive models. It evaluates the targeted effects in complex outcome models by combining a predictive model with a user-defined directed acyclic graph. Inspired by causal inference and randomized experiments, researchers have developed an algorithm to estimate AME (Average Marginal Effect)~\cite{lin2022measuring}, a measure of the expected average marginal effect of adding a data point to a subset of the training data sampled from a uniform distribution. CF-SHAP~\cite{albini2022counterfactual} is a method that combines counterfactual information for feature attribution. It strengthens and clarifies the link between actionable recourse and feature attributions, playing a role in advancing the development of causal feature attribution. Janzing et al.~\cite{janzing2020feature} primarily focus on distinguishing between calculating Shapley values based on observational versus interventional conditional distributions. They were among the first to emphasize the role of causality in feature attribution. Besides, Janzing et al.~\cite{janzing2024quantifying} propose Shapley-ICC, which uses structure-preserving interventions and Shapley-based symmetrization to obtain a \emph{global} decomposition of the target variable's uncertainty, such as variance or Shannon entropy. This objective and its modeling requirements differ from ours. DAG-SHAP operates on a causal DAG over observed features and estimates post-edge-intervention distributions over the observed variables. It therefore does not require identifying SCM-level exogenous noise variables. In contrast, Shapley-ICC attributes uncertainty through the exogenous noises $N_i$ in an SCM. Since such $N_i$ are generally not identifiable from a DAG and observational $P(X)$ alone, Shapley-ICC relies on stronger structural assumptions. Consequently, without additional SCM assumptions, Shapley-ICC is not applicable to our instance-wise DAG-based feature attribution setting.

Overall, classic feature attribution methods provide useful explanations but are typically not grounded in causal semantics, while existing causal Shapley-style methods mainly rely on node interventions and cannot capture fine-grained interactions along individual causal links in a feature DAG. These limitations motivate our proposed DAG-SHAP, which performs \emph{edge-based interventions} to obtain instance-wise attributions that respect the DAG structure while explicitly accounting for externality and exogenous contributions.

\section{Preliminaries}\label{sec:preliminaries}

In this section, we review the formal definitions of the feature attribution problem, the Shapley value, and fundamental attribution properties. 

\subsection{Feature Attribution}

Let $f: \mathbb{R}^n\rightarrow \mathbb{R}$ denote a trained ML model defined over the feature set $\sV=\{X_1,\dots,X_n\}$. For an arbitrary input vector $\vx=(\vx_1,\dots,\vx_n) \in \mathbb{R}^n$, the objective of feature attribution is to assign to each component $\vx_i$ a real-valued score $\phi_i\in \mathbb{R}$, referred to as its \textit{attribution value}, reflecting the contribution of $\vx_i$ to the model's output $f(\vx)$. With these attribution values, $f(\vx)$ can be decomposed as
\begin{equation}
    f(\vx) = f_0 + \sum\nolimits_{i=1}^n \phi_i,
\end{equation}
where $f_{0}\in \mathbb{R}$ is known as the \textit{baseline}, which may vary depending on the attribution goal. 

\subsection{Shapley Value}
Consider a set of players $\sP=\{1,\ldots,n\}$. A \emph{coalition} $\mathcal{S}$ is a subset of $\sP$ that cooperates to complete a task. A utility function $\mathcal{U}(\mathcal{S})$ $(\mathcal{S} \subseteq \sP)$ is the utility of coalition $\mathcal{S}$ for a task. The \emph{marginal contribution} of player $i$ with respect to a coalition $\mathcal{S}$ is $\mathcal{U}(\mathcal{S}\cup \{i\})-\mathcal{U}(\mathcal{S})$. Shapley value (SV) is the unique metric that satisfies the properties of fair reward allocation, including balance, symmetry, additivity, and zero element~\cite{winter2002shapley}. It measures the expectation of the marginal contribution of $i$ across all possible coalitions, that is,
\begin{equation}\label{equ:SV}
  \mathcal{SV}_i=\frac{1}{n} \sum_{\mathcal{S}\subseteq \sP \setminus \{i\}}\frac{\mathcal{U}(\mathcal{S}\cup \{i\})-\mathcal{U}(\mathcal{S})}{\binom{n-1}{|\mathcal{S}|}}.
\end{equation} 
According to Eq.~(\ref {equ:SV}), we can find that computing the exact Shapley value requires enumerating all utilities for all player subsets. Therefore, the computational complexity of exactly calculating the Shapley value is exponential~\cite{deng1994complexity}.
In the context of feature attribution, the feature set $\sV$ can be interpreted as the player set $\sP$, thereby enabling the SV-based methods to determine attribution values.

\subsection{Fundamental Properties}\label{sec:fundamental_properties}

\textit{Linearity}, \textit{Implementation Invariance}, \textit{Sensitivity}, and \textit{Dummy} are standard properties for feature attribution methods. These properties are generally straightforward for the attribution methods considered in this paper to satisfy, and thus they are not the main source of distinction among the methods. For a feature $X_i$, we denote its attribution value by $\phi_i$, and these properties can be stated as follows.

\begin{itemize}[align=left, leftmargin=0pt, labelindent=\parindent, listparindent=\parindent, labelwidth=0pt, itemindent=!, itemsep=0pt, topsep=0pt, parsep=0pt]
\item \textit{Linearity}: For any two value functions $f_u$ and $f_w$, and their sum $f=f_u+f_w$, the attributed value of each feature in $f$ should be equal to the sum of its attributed values in $f_u$ and $f_w$, i.e., $\phi_i(f)=\phi_i(f_u)+\phi_i(f_w)$.
\item \textit{Implementation Invariance}: The attribution results from an attribution method should be identical for two models if their outputs are the same for all inputs, even though their implementations may differ significantly.
\item \textit{Sensitivity}: If two inputs differ in only one feature and the model predictions for these two inputs are different, then the attribution for that differing feature should be non-zero. Formally, let $\bm{x}$ and $\bm{x}'$ be two inputs that differ only in feature $X_i$, and the corresponding model predictions satisfy $f(\bm{x})\neq f(\bm{x}')$. Then, the attribution $\phi_i$ should satisfy $\phi_i\neq 0$.
\item \textit{Dummy}: The attributed value of a feature should be zero if it has no exogenous influence on the outcome.
\end{itemize}

DAG-SHAP satisfies the four standard properties above, which serve as fundamental requirements for feature attribution methods. However, these properties are not sufficiently discriminative in causal feature attribution over DAGs, since they do not explicitly characterize whether an attribution method respects causal transmissions, captures cooperative effects involving descendants, or isolates exogenous influences. Therefore, we further focus on the following four properties, which are more challenging and central to our setting.

\begin{itemize}[align=left, leftmargin=0pt, labelindent=\parindent, listparindent=\parindent, labelwidth=0pt, itemindent=!, itemsep=0pt, topsep=0pt, parsep=0pt]
\item \textit{Causality}: The attribution value of each feature should be determined by its causal effect on the model output, i.e., by changes in $f$ under appropriate interventions, rather than by observational correlations induced by dependencies among features. 
\item \textit{Efficiency}: The sum of attribution values across all features should be equal to the difference between the model output and the baseline. 
\item \textit{Externality}: The attribution value of each feature should be evaluated across coalitions that include all its descendants, ensuring that cooperative effects are fully captured. 
\item \textit{Exogeneity}: The attribution value of each feature should be attributed solely to its exogenous influence, without being confounded by correlations or dependencies with other features. 
\end{itemize}

Note that the \textit{exogenous influence} of a feature refers to the portion of the overall effect on the model's output that can be directly attributed to the feature itself. 


\section{Problem Formulation \& Motivation}\label{sec:MotivationExample}

We begin by presenting the problem formulation addressed in this paper and subsequently discussing the limitations of existing SV-based feature attribution methods.

\subsection{Problem Formulation}

In this paper, we focus on attribution relative to the widely used baseline $f_0 = \mathbb{E}[f(\rvx)]$, where $\rvx$ is a random feature vector drawn from the input data distribution. This baseline represents the model's expected prediction over the population distribution, providing a natural reference point that quantifies how much an individual input deviates from the average prediction.
Specifically, we model the causal effects of the features as a directed acyclic graph (DAG) $\mathcal{G}=(\sV,\bm{E})$, where each vertex $X_i\in \sV$ represents a feature and each directed edge $\re_i=(p_i,c_i,X_{p_i})$, $p_i,c_i\!\in\! \{1,\!\cdots\!,\!n\}$, denotes the direct causal influence of feature $X_{p_i}$ (the parent) on feature $X_{c_i}$ (the child). The attribution value assigned to each feature must isolate its exogenous influence on the model output, ensuring that both its direct effect and its indirect causal effect are fully captured. Although our formulation uses causal graphs and interventions, our goal is local feature attribution rather than population-level causal effect estimation. We return to this distinction at the end of this section.

\subsection{Limitations of Previous Studies}

\partitle{Running Example}
Consider the toy example in Figure~\ref{fig:toy_example}. The feature set is given by $\sV=\{X_1,X_2\}$, and $\rvx_1, \rvx_2$ denote the exogenous influences of $X_1$ and $X_2$, respectively.
Feature $X_1$ has both a \textit{direct} influence on the model's output $Y=f(X_1,X_2)$ and an \textit{indirect} influence through its causal effect on $X_2$. 
Similarly, $X_2$ influences $Y$ via its own exogenous component $\rvx_2$ and by transmitting the indirect influence of $X_1$.
Given a specific explained input $\vx^*=(\vx_1^*,\vx_2^*)=(0.2,0.8)$, we aim to assign to each $\vx_i^*$ a value $\phi_i$ with respect to their uniform distributions.
\color{black}

In what follows, we discuss the limitations of existing SV-based feature attribution methods using this running example. Specifically, we focus on seven representative approaches that are based on distinct Shapley value variants, as shown in Table \ref{tab:comparison}. 
Due to page constraints, detailed descriptions of these methods are provided in Appendix Section~\ref{sec:mostrealted} of the full paper~\cite{dagshapfullpaper}.

\partitle{Off-Manifold Shapley Value}
As one of the pioneering SV-based feature attribution methods, off-manifold SV assumes feature independence, with the utility function $\mathcal{U}$ constructed to break the structural dependence of features. 
For example, when feature $X_1$ is explicitly specified as $\vx_1^*=0.2$ while the others are left unspecified, the marginal contribution of $X_1$ to the output $Y$ is 
\begin{equation*}
\mathbb{E}[f(X_1,X_2)\mid X_1=\vx_1^*,X_2=\rvx_1\!+\!\rvx_2]-\mathbb{E}[f(\rvx_1,\rvx_1\!+\!\rvx_2)].
\end{equation*}
This marginal contribution does not incorporate the indirect effect of $X_1=0.2$ since the value of $X_2$, i.e., $\rvx_1+\rvx_2$, is randomly sampled from its underlying distribution, which conflicts with the fact that feature $X_2$ is causally determined by feature $X_1$.

\partitle{On-Manifold Shapley Value}
While the on-manifold Shapley value incorporates feature dependency into its formulation, 
it suffers from causal reversion. 
For example, when $X_2$ is explicitly specified as $\vx_2^*=0.8$ while the others remain unspecified, the marginal contribution of $X_2$ is
\begin{equation*}
\mathbb{E}\!\left[f(X_1,X_2)\mid X_1=\rvx_1,X_2=\vx_2^*\right]-\mathbb{E}\!\left[f(\rvx_1,\rvx_1+\rvx_2)\right].
\end{equation*}
Conditioning on $X_2=0.8$ induces a posterior distribution over $X_1$, even though $X_2$ cannot influence $X_1$ from a causal perspective. Hence, this conditional expectation attributes to $\vx_2^*$ a part of the contribution that does not causally belong to $\vx_2^*$.

\partitle{Symmetry Causal Shapley Value}
It faces the same causal reversion issue, since the utility for $\vx_2^*=0.8$ is evaluated via an intervention that is not aligned with the data generation process.
Considering the marginal contribution of $\vx_2^*=0.8$ when cooperating with the empty set, it yields
\begin{equation*}
\mathbb{E}[f(\rvx_1,X_2)\mid \operatorname{do}(X_2=0.8)]-\mathbb{E}[f(\rvx_1,\rvx_1+\rvx_2)].
\end{equation*}
From a causal perspective, the event $X_2=0.8$ may rely on $X_1=0.2$ in the explained input. Without $\vx_1^*=0.2$, forcing $X_2$ to be $0.8$ is not a coherent intervention for evaluating the utility of $\vx_2^*$, and thus leads to an unreasonable attribution.

\partitle{Asymmetric (Causal) Shapley Value}
Both methods compute marginal contributions only over permutations in which the ancestor vertices appear before their descendants. This ordering constraint prevents an ancestor from receiving cooperative gains from its descendants, thereby violating the externality.
In our toy example, $\vx_1^*=0.2$ has a direct influence on $Y$, but it cannot receive a marginal contribution from cooperating with $\vx_2^*$ because $\vx_1^*$ must appear before $\vx_2^*$ in all permitted permutations. As a result, the attribution of $\vx_1^*$ becomes independent of $\vx_2^*$, contradicting the externality principle in cooperative game theory~\cite{shapley1953value}.

\partitle{Shapley Flow}
While satisfying \emph{cut-efficiency} (i.e., the sum of attributions within each cut equals the prediction change induced by that cut), Shapley flow does not satisfy the global efficiency axiom in Shapley's original formulation, where the sum of attributions over all features should equal the total change~\cite{shapley1953value}.
Moreover, Shapley flow enforces a cut-efficiency style conservation, which makes attributions ``flow'' through the graph and balance across cuts. Consequently, this approach implicitly assumes that only vertices with no incoming edges can introduce exogenous contributions, while intermediate vertices merely transmit contributions inherited from their ancestors. In our toy example, $X_2$ has its own exogenous influence $\rvx_2$ ($X_2=X_1+\rvx_2$), but these methods cannot properly isolate and attribute this independent exogenous part.

\partitle{Recursive Shapley Value}
The top-down hierarchical decomposition of the recursive Shapley value processes parent nodes prior to their descendants, eliminating the dependence of a parent node's attribution on its child nodes. Therefore, it does not satisfy the externality property. Similar to Shapley Flow, it also satisfies cut-efficiency. Consequently, it shares the same limitation that it cannot properly attribute the exogenous influences of non-root variables (i.e., it also violates exogeneity). Moreover, by enforcing cut-efficiency, it fails to satisfy the efficiency property.

\begin{table}[t]
\caption{Comparison of feature attribution method in terms of having (\cmark) and missing 
(\xmark) desiderata. }\label{tab:propertyTable}
\vspace{-1em}
\centering
\begingroup
\footnotesize
\setlength{\tabcolsep}{2.4pt}
\renewcommand{\arraystretch}{1.08}
\begin{tabular}{@{}lcccc@{}}
\toprule
\textbf{Method} &  \textbf{Causality} & \textbf{Efficiency} & \textbf{Externality} & \textbf{Exogeneity} \\ \midrule
Off-manifold SV~\cite{scott2017unified} &  \xmark  & \cmark & \cmark & \cmark \\ 
On-manifold SV~\cite{scott2017unified} &  \xmark & \cmark & \cmark & \cmark \\ 
Asymmetric SV~\cite{frye2020asymmetric} & \cmark  & \cmark &  \xmark& \cmark \\ 
Sym. causal SV~\cite{DBLP:conf/nips/HeskesSBC20} & \xmark & \cmark & \cmark & \cmark\\
Asym. causal SV~\cite{frye2020asymmetric} & \cmark &  \cmark & \xmark & \cmark \\ 
Shapley Flow~\cite{DBLP:conf/aistats/WangWL21} &  \cmark & \xmark &  \cmark & \xmark \\ 
Recursive SV~\cite{DBLP:conf/icml/SingalMN21} & \cmark & \xmark & \xmark & \xmark\\ 
DAG-SHAP (Ours) &  \cmark & \cmark  & \cmark & \cmark \\ \bottomrule
\end{tabular}
\endgroup
\vspace{-1em}
\label{tab:comparison}
\end{table}

\subsection{Feature Attribution vs. Causal Effect Estimation}

Since our formulation uses causal graphs and interventions, it is necessary to clarify that the target of this paper is still feature attribution, rather than causal effect estimation itself. The attribution problem considered in this paper is inherently local: it explains why an individual input deviates from the baseline prediction, rather than estimating a population-level average causal effect. In contrast, measures such as Average Causal Effect (ACE), direct effect, and indirect effect quantify how changing a feature affects the outcome on average across a population or along a specified causal path. They are useful causal quantities, but they do not allocate the collaborative contribution of a specific feature tuple to a particular model prediction. This distinction matters because local feature attribution must explain the specific interaction of feature values in one data point~\cite{schamberg2020direct}.

For example, suppose a model predicts the probability of a heart attack, denoted by $R$, from two features: body-mass index $M$ and average sleep duration $L$. Consider a baseline individual with $M=25$ and $L=7$ hours, and an explained individual with $M=30$ and $L=5$ hours. If ACE estimates that a one-unit increase in $M$ raises the probability by $1\%$ and reducing $L$ by one hour raises it by $2\%$, simply adding these global average effects would estimate a $9\%$ increase ($5$ BMI units $\times 1\%$ plus $2$ hours less sleep $\times 2\%$). However, because body-mass index and sleep duration may have synergistic effects, the prediction increase for this individual may be $11\%$ rather than the additive $9\%$ suggested by ACE. ACE describes average responses to isolated changes, but it does not determine how this instance-level $11\%$ prediction change should be distributed between $M$ and $L$. A local attribution method should allocate this joint effect between the two features, for example assigning $6.5\%$ to $M$ and $4.5\%$ to $L$, thereby explaining the specific prediction instead of replacing attribution with a global causal effect estimate. This is the setting in which causal knowledge and Shapley-style local contribution allocation need to be combined.

\partitle{Conclusion} As discussed above, existing SV-based attribution methods fail to simultaneously satisfy the four key properties of causality, efficiency, externality, and exogeneity. 
In contrast, we propose a novel approach that satisfies all four properties, the details of which are presented in the next section.

\section{DAG-SHAP: Edge Intervention Causal Shapley value}\label{sec:method}
We propose \textbf{DAG-SHAP}, an edge-based feature attribution method suitable for the scenario where the causal relations of input features can be formulated as a DAG. DAG-SHAP enables us to gain a deeper understanding of how features collaboratively contribute to the model output using the edge intervention to capture the causal influence within the features. 

\partitle{Edge Intervention}
Consider a DAG $\mathcal{G}=(\sV,\bm{E})$ with a set of vertices $\sV=\{X_1,\!\cdots\!,X_n\}$ and directed edges $\bm{E}=\{\re_1,\!\cdots\!,\re_m\}$. 
Given a specific explained input $\vx$, we instantiate the edges as $E(\vx)=\{e_1,\!\cdots\!,e_m\}$, where each instantiated edge $e_i=(p_i,c_i,\vx_{p_i})$ fixes the value transmitted from $p_i$ to $c_i$\footnote{For notational convenience, we abbreviate $X_{p_i},X_{c_i}$ as $p_i,c_i$ throughout the remainder of this paper.} to be $\vx_{p_i}$.
For a set of intervened edge indices $\mathcal{S}\subseteq\{1,\!\cdots\!,m\}$, we denote an \textbf{edge intervention} by
$$\operatorname{do}(\rve_{\mathcal{S}}=e_{\mathcal{S}}),$$
which enforces that, for every $i\in\mathcal{S}$, the causal influence from $p_i$ to $c_i$ is computed using the fixed value $\vx_{p_i}$, 
while leaving all other incoming edges of $c_i$ and all other outgoing edges of $p_i$ unchanged.
In this way, edge interventions enable fine-grained, pathway-specific causal analysis by intervening on selected parent-child transmissions without globally altering the entire set of child vertices of the same parent.

\partitle{DAG-SHAP}
Let $\Pi$ denote the set of all topological orderings of a given instantiated edge set $E(x)$. Each ordering $\pi\in \Pi$ respects the causal precedence constraints among edges, thereby specifying a valid sequence in which edge interventions can be applied. 
We introduce the formal definition of \textbf{edge-intervention causal Shapley value} as follows.

\begin{definition}[Edge-Intervention Causal Shapley Value]
\label{def:dag_shap}
Given the set $\Pi$. Let $\pi(i)$ denote the position of edge $e_i$ in the ordering $\pi\in \Pi$, and the sets 
$$\mathcal{S}_{\pi}^i=\{j\!\mid\!\pi(j)\!<\!\pi(i)\}, \quad \underline{\mathcal{S}}_{\pi}^i=\{j\!\mid\!\pi(j)\!\le\!\pi(i)\}$$
corresponding to the edges that precede $e_i$ and the edges up to and including $e_i$, respectively. The edge-intervention causal Shapley value of edge $e_i$ is given by
\begin{equation}\label{eq.edge_intervention}
\begin{aligned}
\Psi(e_i)=\frac{1}{|\Pi|}\sum_{\pi\in\Pi}\Big\{
&\mathbb{E}\!\left[f(\rvx)\mid \operatorname{do}(\rve_{\underline{\mathcal{S}}_{\pi}^i}=e_{\underline{\mathcal{S}}_{\pi}^i})\right] \\
&-\mathbb{E}\!\left[f(\rvx)\mid \operatorname{do}(\rve_{\mathcal{S}_{\pi}^i}=e_{\mathcal{S}_{\pi}^i})\right]
\Big\}.
\end{aligned}
\end{equation}

\end{definition}
This formulation implies that the attribution value of edge $e_i$ is the average marginal contribution of intervening on $e_i$, conditioned on having already intervened on all edges preceding $e_i$ in the ordering $\pi$. For brevity, we also refer to Definition~\ref{def:dag_shap} as \textbf{DAG-SHAP}. Unless otherwise specified, DAG-SHAP will be used throughout the remainder of this paper.

Let $\mathcal{O}_k$ be the set of edges with vertex $k$ as the parent, the DAG-SHAP of vertex $k$ is then defined as the sum of attribution values of its outgoing edges, which can be formulated as
\begin{equation}
   \Phi(k) = \sum_{e \in \mathcal{O}_k} \Psi(e).
\end{equation}

Detailed proofs showing that DAG-SHAP satisfies  all eight properties discussed in Section~\ref{sec:fundamental_properties} are provided in Appendix Section~\ref{sec:appendix_properties}. Since the aim of our paper is to derive the contribution of each feature, all the properties we prove pertain to the vertices of the DAG.

\partitle{Example}
We illustrate the definition of DAG-SHAP using the toy example in Figure~\ref{fig:toy_example}.
Denote the three edges as $\re_1: X_1\rightarrow X_2$, $\re_2: X_1\rightarrow Y$, and $\re_3: X_2\rightarrow Y$.
For the explained input $\vx^*=(0.2,0.8)$, the instantiated edges are
$e_1=(X_1,X_2,0.2)$, $e_2=(X_1,Y,0.2)$, and $e_3=(X_2,Y,0.8)$.
Since $e_1$ must precede $e_3$, the valid topological orderings are
\[
\Pi=\{(e_1,e_2,e_3),\ (e_1,e_3,e_2),\ (e_2,e_1,e_3)\}.
\]
For notational convenience, we define the utility function
\[
U(\mathcal{S}) := \mathbb{E}\!\left[f(\rvx)\mid \operatorname{do}(\rve_{\mathcal{S}}=e_{\mathcal{S}})\right],
\qquad U(\emptyset)=\mathbb{E}[f(\rvx)].
\]
Consider the ordering $\pi'=(e_1,e_2,e_3)$. The required utilities are
\begin{align*}
U(\{1\}) &= \mathbb{E}_{\overline{\rvx}_2\sim P(\rvx_2\mid \vx_1^*=0.2)}\!\left[f(\rvx_1,\overline{\rvx}_2)\right], \\
U(\{1,2\}) &= \mathbb{E}_{\overline{\rvx}_2\sim P(\rvx_2\mid \vx_1^*=0.2)}\!\left[f(0.2,\overline{\rvx}_2)\right], \\
U(\{1,2,3\}) &= \mathbb{E}\!\left[f(0.2,0.8)\right].
\end{align*}
Thus, the marginal contributions along $\pi'$ are
\begin{align*}
\Delta_{\pi}(e_1) & =U(\{1\})-U(\emptyset),\\
\Delta_{\pi}(e_2) & =U(\{1,2\})-U(\{1\}),\\
\Delta_{\pi}(e_3) & =U(\{1,2,3\})-U(\{1,2\}).
\end{align*}

We have computed the marginal contributions for the ordering $\pi' = (e_1, e_2, e_3)$. 
The marginal contributions for the other two valid orderings, $\pi'' = (e_1, e_3, e_2)$ and $\pi''' = (e_2, e_1, e_3)$, are computed analogously. 
Finally, the DAG-SHAP $\Psi(e_i)$ is obtained by averaging $\Delta_{\pi}(e_i)$ over all three orderings $\pi \in \Pi = \{\pi', \pi'', \pi'''\}$, and the vertex attribution is $\Phi(k)=\sum_{e\in\mathcal{O}_k}\Psi(e)$.

\begin{remark} 
Causal Shapley value and do-Shapley value~\cite{jung2022measuring, parafita2025practical} both use node interventions, with the key difference being whether the goal is to explain the impact on the data generation process of $Y$ or on a predictive model $f$. This is reflected in their utility definitions: for do-Shapley, $v(S) = \mathbb{E}\left[\mathbf{Y} \mid \operatorname{do}(\mathbf{x}_S = x_S)\right]$, and for Causal Shapley, $v(S) = \mathbb{E}\left[f(\mathbf{x}) \mid \operatorname{do}(\mathbf{x}_S = x_S)\right]$. Node interventions can be further classified into symmetric-sampling node interventions and asymmetric-sampling node interventions.
\end{remark} 

\partitle{Edge Intervention vs. Node Intervention}
The running example in Section~\ref{sec:MotivationExample} illustrates the limitations of existing Shapley-style attribution methods. We now use a minimal example to further clarify the key methodological difference between edge intervention and node intervention. Node interventions treat an entire feature value as the intervention object, and therefore may mix a feature's own exogenous influence with the influence inherited from its ancestors. In contrast, edge interventions operate on individual causal transmissions, allowing DAG-SHAP to separate inherited effects from exogenous effects before aggregating them to feature-level attributions. To avoid additional complications from graph constraints, we focus here on symmetric node-intervention, which is the closest node-level counterpart to our Shapley-style edge intervention, leaving the asymmetric case to Appendix Section~\ref{sec:comparasion_asy_Intervention} in the full paper.

We use the following example to show the difference. Let $X_1=\mathbf{x}_1$, where $\mathbf{x}_1$ is a random variable uniformly distributed on $[0,1]$, representing the exogenous influence of $X_1$; let $X_2=X_1 + \mathbf{x}_2$, where $\mathbf{x}_2$ is another random variable uniformly distributed on $[0,1]$, representing the exogenous influence of $X_2$. The target is generated as $Y=\max(X_1, X_2)$. We aim to attribute values to the explained input $x^*=[x_1^*,x_2^*]=[1,2]$ with respect to the baseline $[0,0]$.

For symmetric node intervention, both sampling permutations $(x_1^*,x_2^*)$ and $(x_2^*,x_1^*)$ are valid. The marginal contributions of $x_1^*$ and $x_2^*$ in these two permutations are shown in Table~\ref{tab:sym_node_marginal}, where each row explicitly keeps the corresponding intervention contrast. Thus, the attribution value for $x_1^*$ under symmetric node intervention is $(3/2+0)/2=3/4$, while the attribution value for $x_2^*$ is $(2+1/2)/2=5/4$. This result is misleading: although $X_1$ directly influences $X_2$ and therefore also affects $Y$ through $X_2$, the node intervention on $X_2$ mixes the exogenous contribution of $X_2$ with the influence inherited from $X_1$.


\begin{table*}[t]
\centering
\caption{Symmetric-sampling node interventions: marginal contributions under $(x_1^*,x_2^*)$ and $(x_2^*,x_1^*)$.}
\label{tab:sym_node_marginal}
\begingroup
\small
\setlength{\tabcolsep}{4pt}
\renewcommand{\arraystretch}{1.25}
\begin{tabularx}{\textwidth}{@{}C{0.22\textwidth}>{\centering\arraybackslash}X C{0.16\textwidth}@{}}
\toprule
\textbf{Node and permutation} & \textbf{Marginal contribution} & \textbf{Value} \\
\midrule
$x_1^*$ in $(x_1^*,x_2^*)$ &
$\mathbb{E}\!\left[\mathbf{Y}\mid \operatorname{do}\!\left(\mathbf{x}_{\{1\}}=x_{\{1\}}^*\right)\right]
-\mathbb{E}\!\left[\mathbf{Y}\mid \operatorname{do}(\emptyset)\right]$ &
$\frac{3}{2}-0=\frac{3}{2}$ \\
$x_1^*$ in $(x_2^*,x_1^*)$ &
$\mathbb{E}\!\left[\mathbf{Y}\mid \operatorname{do}\!\left(\mathbf{x}_{\{1,2\}}=x_{\{1,2\}}^*\right)\right]
-\mathbb{E}\!\left[\mathbf{Y}\mid \operatorname{do}\!\left(\mathbf{x}_{\{2\}}=x_2^*\right)\right]$ &
$2-2=0$ \\
$x_2^*$ in $(x_2^*,x_1^*)$ &
$\mathbb{E}\!\left[\mathbf{Y}\mid \operatorname{do}\!\left(\mathbf{x}_{\{2\}}=x_{\{2\}}^*\right)\right]
-\mathbb{E}\!\left[\mathbf{Y}\mid \operatorname{do}(\emptyset)\right]$ &
$2-0=2$ \\
$x_2^*$ in $(x_1^*,x_2^*)$ &
$\mathbb{E}\!\left[\mathbf{Y}\mid \operatorname{do}\!\left(\mathbf{x}_{\{1,2\}}=x_{\{1,2\}}^*\right)\right]
-\mathbb{E}\!\left[\mathbf{Y}\mid \operatorname{do}\!\left(\mathbf{x}_{\{1\}}=x_{\{1\}}^*\right)\right]$ &
$2-\frac{3}{2}=\frac{1}{2}$ \\
\bottomrule
\end{tabularx}
\endgroup
\end{table*}

For DAG-SHAP edge interventions, the edges for the instance $x^*$ are denoted by $e_1^*:X_1\to X_2$, $e_2^*:X_1\to Y$, and $e_3^*:X_2\to Y$. The valid edge permutations are $(e_1^*,e_2^*,e_3^*)$, $(e_1^*,e_3^*,e_2^*)$, and $(e_2^*,e_1^*,e_3^*)$, since $e_1^*$ must precede $e_3^*$. The marginal contributions of each edge in each arrangement are shown in Table~\ref{tab:edge_intervention_marginals_sym}. The attribution value for $x_1^*$ in DAG-SHAP is $(3/2 + 3/2 + 1/4) / 3 + (0 + 0 + 5/4) / 3 = 3/2$, and the attribution value for $x_2^*$ is $(1/2 + 1/2 + 1/2) / 3 = 1/2$. Since $x_1^*=1$ directly influences $X_2$ and in turn influences $Y=\max(X_1,X_2)$ through $X_2$, $x_1^*$ should receive more attribution than $x_2^*$. DAG-SHAP obtains this result because it attributes the causal transmissions $X_1\to X_2$, $X_1\to Y$, and $X_2\to Y$ separately before aggregating outgoing-edge contributions to feature-level attributions. By contrast, symmetric node intervention gives the misleading result $\phi_2>\phi_1$ because it includes the inherited contribution from $X_1$ in the marginal contribution of $X_2$, i.e.,
$\mathbb{E}[Y \mid \operatorname{do}(X_2 = x_2^*)] - \mathbb{E}[Y \mid \operatorname{do}(\emptyset)]=2$.

\begin{table*}[t]
\centering
\caption{Edge interventions (DAG-SHAP): marginal contributions under all valid edge permutations.}
\label{tab:edge_intervention_marginals_sym}
\begingroup
\small
\setlength{\tabcolsep}{4pt}
\renewcommand{\arraystretch}{1.25}
\begin{tabularx}{\textwidth}{@{}C{0.22\textwidth}>{\centering\arraybackslash}X C{0.16\textwidth}@{}}
\toprule
\textbf{Edge and permutation} & \textbf{Marginal contribution} & \textbf{Value} \\
\midrule
$e_1^*$ in $(e_1^*,e_2^*,e_3^*)$ &
$\mathbb{E}\!\left[\mathbf{Y}\mid \operatorname{do}\!\left(\mathbf{e}_{\{1\}}=e_1^*\right)\right]
-\mathbb{E}\!\left[\mathbf{Y}\mid \operatorname{do}(\emptyset)\right]$ &
$\frac{3}{2}-0=\frac{3}{2}$ \\

$e_1^*$ in $(e_1^*,e_3^*,e_2^*)$ &
$\mathbb{E}\!\left[\mathbf{Y}\mid \operatorname{do}\!\left(\mathbf{e}_{\{1\}}=e_1^*\right)\right]
-\mathbb{E}\!\left[\mathbf{Y}\mid \operatorname{do}(\emptyset)\right]$ &
$\frac{3}{2}-0=\frac{3}{2}$ \\

$e_1^*$ in $(e_2^*,e_1^*,e_3^*)$ &
$\mathbb{E}\!\left[\mathbf{Y}\mid \operatorname{do}\!\left(\mathbf{e}_{\{1,2\}}=e_{\{1,2\}}^*\right)\right]
-\mathbb{E}\!\left[\mathbf{Y}\mid \operatorname{do}\!\left(\mathbf{e}_{\{2\}}=e_2^*\right)\right]$ &
$\frac{3}{2}-\frac{5}{4}=\frac{1}{4}$ \\

$e_2^*$ in $(e_1^*,e_2^*,e_3^*)$ &
$\mathbb{E}\!\left[\mathbf{Y}\mid \operatorname{do}\!\left(\mathbf{e}_{\{1,2\}}=e_{\{1,2\}}^*\right)\right]
-\mathbb{E}\!\left[\mathbf{Y}\mid \operatorname{do}\!\left(\mathbf{e}_{\{1\}}=e_1^*\right)\right]$ &
$\frac{3}{2}-\frac{3}{2}=0$ \\

$e_2^*$ in $(e_1^*,e_3^*,e_2^*)$ &
$\mathbb{E}\!\left[\mathbf{Y}\mid \operatorname{do}\!\left(\mathbf{e}_{\{1,2,3\}}=e_{\{1,2,3\}}^*\right)\right]
-\mathbb{E}\!\left[\mathbf{Y}\mid \operatorname{do}\!\left(\mathbf{e}_{\{1,3\}}=e_{\{1,3\}}^*\right)\right]$ &
$2-2=0$ \\

$e_2^*$ in $(e_2^*,e_1^*,e_3^*)$ &
$\mathbb{E}\!\left[\mathbf{Y}\mid \operatorname{do}\!\left(\mathbf{e}_{\{2\}}=e_2^*\right)\right]
-\mathbb{E}\!\left[\mathbf{Y}\mid \operatorname{do}(\emptyset)\right]$ &
$\frac{5}{4}-0=\frac{5}{4}$ \\

$e_3^*$ in $(e_1^*,e_2^*,e_3^*)$ &
$\mathbb{E}\!\left[\mathbf{Y}\mid \operatorname{do}\!\left(\mathbf{e}_{\{1,2,3\}}=e_{\{1,2,3\}}^*\right)\right]
-\mathbb{E}\!\left[\mathbf{Y}\mid \operatorname{do}\!\left(\mathbf{e}_{\{1,2\}}=e_{\{1,2\}}^*\right)\right]$ &
$2-\frac{3}{2}=\frac{1}{2}$ \\

$e_3^*$ in $(e_1^*,e_3^*,e_2^*)$ &
$\mathbb{E}\!\left[\mathbf{Y}\mid \operatorname{do}\!\left(\mathbf{e}_{\{1,3\}}=e_{\{1,3\}}^*\right)\right]
-\mathbb{E}\!\left[\mathbf{Y}\mid \operatorname{do}\!\left(\mathbf{e}_{\{1\}}=e_1^*\right)\right]$ &
$2-\frac{3}{2}=\frac{1}{2}$ \\

$e_3^*$ in $(e_2^*,e_1^*,e_3^*)$ &
$\mathbb{E}\!\left[\mathbf{Y}\mid \operatorname{do}\!\left(\mathbf{e}_{\{1,2,3\}}=e_{\{1,2,3\}}^*\right)\right]
-\mathbb{E}\!\left[\mathbf{Y}\mid \operatorname{do}\!\left(\mathbf{e}_{\{1,2\}}=e_{\{1,2\}}^*\right)\right]$ &
$2-\frac{3}{2}=\frac{1}{2}$ \\
\bottomrule
\end{tabularx}
\endgroup
\end{table*}

\section{Implementation of DAG-SHAP}\label{sec:computation_DAG-SHAP}

In this section, we introduce an exact computation method for edge intervention causal Shapley value by inferring feature distributions. Additionally, we propose an approximate algorithm for cases where these distributions are difficult to estimate in real datasets, as shown in Section~\ref{sec:approximate_DAG-SHAP}.

\subsection{Exact Computation of DAG-SHAP} \label{sec:proDistribution}

To compute the edge intervention Shapley value, the initial step involves calculating the value function \( \mathcal{U}(\mathcal{S}) = \mathbb{E}[f(\rvx) \mid \operatorname{do}(\rve_{\mathcal{S}} = e_{\mathcal{S}})] \), which necessitates determining the distribution of $\rvx$ under edge intervention. When intervening on the edges \( \rve_\mathcal{S} \), for those edges where the child vertex is the target label, the values of the parent vertices corresponding to these edges can be directly fixed by the feature values of the input $\vx$ since these edges represent the direct influence of features to the target label.  Let \( \mathbb{D}_{\mathcal{S}} \) denote the set of features that have a direct influence on the target label, and whose direct influence on the target label is through the edges in \( \mathbf{e}_\mathcal{S} \).  \( \sV_{\mathcal{S}} \) is the complement of \( \mathbb{D}_{\mathcal{S}} \). For the value function \( \mathcal{U}(\mathbf{\mathcal{S}}) = \mathbb{E}[f(\rvx) \mid \operatorname{do}(\mathbf{e}_{\mathcal{S}} = e_{\mathcal{S}})] \), the values of the features in \( \mathbb{D}_{\mathcal{S}} \) are determined by the values of the features in \( \bm{x} \), given that their direct influences are subjected to intervention. Consequently, the DAG-SHAP for edge \( e_i \) in Eq. (\ref{eq.edge_intervention}) can be reformulated as

\vspace{-1em}
\begin{equation*}
\begin{aligned}
\Psi_v(e_i)
&=\sum_{\pi \in \Pi} \frac{1}{|\Pi|}
\begin{aligned}[t]
&\Big\{\mathbb{E}[f(\rvx_{\sV_{\underline{\mathcal{S}}_{\pi}^i}}, \vx_{\mathbb{D}_{\underline{\mathcal{S}}_{\pi}^i}})
\mid\operatorname{do}(\rve_{\underline{\mathcal{S}}_{\pi}^i}\!=e_{\underline{\mathcal{S}}_{\pi}^i})] \\
&-\mathbb{E}[f(\rvx_{\sV_{\mathcal{S}_{\pi}^i}}, \vx_{\mathbb{D}_{\mathcal{S}_{\pi}^i}})
\mid\operatorname{do}(\rve_{\mathcal{S}_{\pi}^i}=e_{\mathcal{S}_{\pi}^i})]\Big\}.
\end{aligned}
\end{aligned}
\end{equation*}

To calculate $\mathbb{E} [f(\rvx_{\sV_{\mathcal{S}}}, \vx_{\mathbb{D}_{\mathcal{S}}})|\operatorname{do}(\rve_{\mathcal{S}}=e_{\mathcal{S}})]$, we should determine how to get the distribution of $\rvx_{\sV_{\mathcal{S}}}$. 
For a given DAG, the distribution of the data $\rvx$ generated by this graph satisfies the Markov property. Denote the distribution of the data as $P(\rvx)$, which can be expressed as follows
\begin{displaymath}
P(\rvx)=\Pi_{i \in \sV} P(\rvx_i|\rvx_{pa(i)}),    
\end{displaymath}
where $pa(i)$ represents the set of parent vertices of $i$ in the graph. When we intervene in the generation of data, the conditional distribution of each feature needs to be changed accordingly. The distribution formula of $\rvx_{\sV_\mathcal{S}}$ after the intervention is
\begin{displaymath}
P(\rvx_{\sV_\mathcal{S}}|\operatorname{do}(\rve_{\mathcal{S}}=e_{\mathcal{S}})) = \Pi_{i \in \sV_\mathcal{S}} P(\rvx_i|\rvx_{pa(i)\cap{\sV_{\mathcal{S}}^i}},\vx_{pa(i)\cap{\mathbb{D}_{\mathcal{S}}^i}}),    
\end{displaymath}
where $\mathbb{D}_\mathcal{S}^i$ is the set of features that have a direct influence on the value of feature $i$ and are connected to $i$ by edges in $\mathbf{e}_{\mathcal{S}}$. $\sV_\mathcal{S}^i$ is the complement of $\mathbb{D}_\mathcal{S}^i$. Since the permutations used to calculate the marginal contributions follow the topological order, the distributions of all parent vertices are already determined when calculating the distribution of a particular feature. We sequentially compute the distributions of all vertices. After calculating the marginal contribution for all topological orderings, a weighted average can be used to obtain the attribution value for each edge. By aggregating the attribution values of all outgoing edges of a vertex, we can compute the total attribution value for the corresponding feature point, reflecting its integrated contribution across all causal pathways.

\subsection{Approximation of DAG-SHAP}\label{sec:approximate_DAG-SHAP}

\begin{algorithm}[htbp]
\caption{Sampling $\rvx_{\sV_\mathcal{S}}$ under edge intervention.} \label{Alg:Edge_intervention}
\begin{algorithmic}[1]
\REQUIRE Edge permutation $\pi$, intervention edge set size $s$, explained input $\bm{x}$, baseline input $\mathcal{D}$.
\ENSURE \, A sample of $\rvx_{\sV_\mathcal{S}}$, denoted as $\overline{\rvx}_{\sV_\mathcal{S}}$.

\STATE Calculate the in-degree of each vertex in the graph and store it in the array $\vd$;
\STATE Randomly initialize $\overline{\rvx}_{\sV_\mathcal{S}}$ based on $\mathcal{D}$;
    
\FOR{$i = 1$ to length of $\pi$}
\STATE Let $c$ denote the child vertex of edge $\pi(i)$;
\STATE Let $p$ denote the parent vertex of edge $\pi(i)$;
\IF{$i \leq s$}
\STATE Assign value $\bm{x}_{p}$ to edge $p \rightarrow c$;
\ENDIF
\STATE  $\vd_c \gets \vd_c - 1$;
\IF{ $\vd_c$ equals $0$}
\STATE Infer the value of $\overline{\rvx}_{c}$ from the values of the edges \\ between it and its parent vertices;
\FOR{each child vertex $k$ of vertex $c$}
\STATE Assign value $\overline{\rvx}_{c}$ to edge $c \rightarrow k$; 
\ENDFOR
\ENDIF
\ENDFOR
    
\RETURN  $\overline{\rvx}_{\sV_\mathcal{S}}$.
\end{algorithmic}
\end{algorithm}

In Section~\ref{sec:proDistribution}, we describe the way to infer the distribution of each feature. However, we generally cannot accurately obtain the specific distribution expressions for each feature for real datasets. Therefore, we use approximation methods based on data sampling to estimate the value of each feature, rather than directly solving the exact form of the distribution. The detailed process is shown in Algorithm~\ref{Alg:Edge_intervention}. Given a valid edge ordering $\pi$ and an intervention budget $s$, we obtain the sampled value of $\rvx_{\sV_{\mathcal{S}}}$ by forward simulation on the DAG and denote it as  $\overline{\rvx}_{\sV_{\mathcal{S}}}$.
It starts from a baseline sample from $\mathcal{D}$, clamps the first $s$ edges in $\pi$ to the corresponding values in $\vx$, and sequentially infers each vertex once its parent-edge values are fixed, propagating the inferred value to its outgoing edges.
This yields one sample from the post-intervention distribution used to estimate $\mathcal{U}(\mathcal{S})$.

Since computing the Shapley value is an \#P-hard problem~\cite{deng1994complexity,zhang2023efficient}, the computational cost becomes very high when the number of edges increases. Existing efficient Shapley methods~\cite{DBLP:conf/icde/0006XSL0P023,DBLP:journals/pacmmod/LuoPXZX24} do not handle DAG-SHAP's topologically constrained edge permutations, where each marginal contribution depends on prior edge interventions. Therefore, we propose an approximation method based on Monte Carlo sampling for DAG-SHAP. The detailed process is shown in Algorithm~\ref{Alg:approximation-DAG-SHAP}. We first enumerate edge permutations that comply with the topological sort and then intervene on each edge to compute its marginal contribution (Lines 6-9). After obtaining the marginal contributions, we compute the weighted average to estimate the attribution value of the edge (Lines 13-14). Finally, we sum the attribution values of the outgoing edges of each feature to get the overall edge attribution estimation (Lines 16-19). The approximation error and scalability of this algorithm are evaluated in Sections~\ref{sec:efficiencyexp} and~\ref{sec:scalabilityexp}, respectively.

\begin{theorem}\label{theorem:DAG_SHAP_sampling}
    Denote by $\tau$ the number of sampled valid permutations. If each edge marginal contribution has range at most $r$, then the approximation based on Algorithm~\ref{Alg:approximation-DAG-SHAP} satisfies
    \[
        \begin{aligned}
    \mathbb{P}(|\overline{\Phi(i)}-\Phi(i)|\geq \epsilon)
    &\leq 2m\exp\!\left(-\frac{2\tau\epsilon^2}{m^2r^2}\right)\\
    &=\mathcal{O}\!\left(m\exp\!\left(-\frac{\tau}{m^2}\right)\right).
    \end{aligned}
    \]
\end{theorem}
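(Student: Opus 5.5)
We treat the sampled permutations $\pi_1,\dots,\pi_\tau$ as i.i.d.\ draws from the uniform distribution on $\Pi$. For each edge $e_j$, define the per-permutation marginal contribution
\[
\Delta_{\pi}(e_j)=U(\underline{\mathcal{S}}_{\pi}^j)-U(\mathcal{S}_{\pi}^j).
\]
By Definition~\ref{def:dag_shap}, $\Psi(e_j)=\mathbb{E}_{\pi\sim\mathrm{Unif}(\Pi)}[\Delta_\pi(e_j)]$. The estimator produced by Algorithm~\ref{Alg:approximation-DAG-SHAP} is $\overline{\Psi(e_j)}=\frac{1}{\tau}\sum_{t=1}^{\tau}\Delta_{\pi_t}(e_j)$, which is unbiased. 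The vertex estimator is $\overline{\Phi(i)}=\sum_{e_j\in\mathcal{O}_i}\overline{\Psi(e_j)}$. Throughout, we assume the utilities $U(\cdot)$ are evaluated exactly, or that their inner sampling error is absorbed in the range bound. The only randomness analysed is therefore the permutation sampling.

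\textbf{Step 1 (per-edge concentration).} Each $\Delta_{\pi_t}(e_j)$ lies in an interval of length at most $r$, and the $\tau$ draws are independent. Hoeffding's inequality therefore gives, for every $\delta>0$,
\[
\mathbb{P}\bigl(|\overline{\Psi(e_j)}-\Psi(e_j)|\ge\delta\bigr)\le 2\exp\!\left(-\frac{2\tau\delta^2}{r^2}\right).
\]

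\textbf{Step 2 (aggregate via union bound).} Since $|\mathcal{O}_i|\le m$, the triangle inequality gives
\[
|\overline{\Phi(i)}-\Phi(i)|\le\sum_{e_j\in\mathcal{O}_i}|\overline{\Psi(e_j)}-\Psi(e_j)|.
\]
So the event $|\overline{\Phi(i)}-\Phi(i)|\ge\epsilon$ forces $|\overline{\Psi(e_j)}-\Psi(e_j)|\ge\epsilon/m$ for at least one $e_j\in\mathcal{O}_i$. A union bound over at most $m$ edges, with $\delta=\epsilon/m$ in Step 1, yields
\[
\mathbb{P}\bigl(|\overline{\Phi(i)}-\Phi(i)|\ge\epsilon\bigr)\le 2m\exp\!\left(-\frac{2\tau\epsilon^2}{m^2r^2}\right),
\]
which is exactly the claimed bound. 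The $\mathcal{O}(m\exp(-\tau/m^2))$ form follows by treating $\epsilon$ and $r$ as constants.

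\textbf{Main obstacle.} The delicate point is the sampling model. Step 1 requires that the algorithm draw valid topological edge orderings \emph{uniformly} from $\Pi$ and independently. Only then is each $\Delta_{\pi_t}(e_j)$ an unbiased sample of $\Psi(e_j)$, which is a uniform average over $\Pi$. If the algorithm instead generates orderings by a randomized topological sort, which is non-uniform in general, we would either reweight the samples by $1/\Pr(\pi)$ or simply assume uniform sampling. I would state uniform sampling as the assumption.

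A second remark is that the union bound is loose, since Hoeffding applied directly to the sum of at most $m$ terms, each of range $r$, gives a sharper rate. We deliberately follow the union-bound route because it reproduces the stated constants.
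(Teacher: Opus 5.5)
Your proof is correct and follows the paper's argument: the triangle inequality over the outgoing edges of vertex $i$, a union bound at level $\epsilon/m$, and Hoeffding's inequality per edge with range $r$, which gives exactly $2m\exp\!\left(-2\tau\epsilon^2/(m^2r^2)\right)$. The sampling issue you flag is already handled by Algorithm~\ref{Alg:approximation-DAG-SHAP}, which draws uniformly random permutations of $E$ and keeps only valid topological sorts; this rejection step makes the $\tau$ accepted orderings i.i.d.\ uniform on $\Pi$, so no reweighting or extra assumption is needed.
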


\begin{proof}
For notational simplicity, index the outgoing edges of feature vertex $i$ by a subset of at most $m$ edges. Since the feature attribution $\Phi(i)$ is obtained by summing the attribution values of these outgoing edges, we have
\begin{align*}
    &\mathbb{P}(|\overline{\Phi(i)}-\Phi(i)|\geq \epsilon) \\
    \leq{}& \mathbb{P}\!\left(\sum_{j=1}^m|\overline{\Psi(j)}-\Psi(j)|\geq \epsilon\right) \\
    \leq{}& \sum_{j=1}^m \mathbb{P}\!\left(|\overline{\Psi(j)}-\Psi(j)|\geq \frac{\epsilon}{m}\right) \\
    \leq{}& 2\sum_{j=1}^m\exp\!\left(-\frac{2\tau(\frac{\epsilon}{m})^2}{(b_j-a_j)^2}\right) \\
    \leq{}& 2m\exp\!\left(-\frac{2\tau(\frac{\epsilon}{m})^2}{r^2}\right),
\end{align*}
where $(a_j,b_j)$ denotes the range of the marginal contribution of edge $j$, and $r=\max_{1\leq j\leq m}(b_j-a_j)$. The second inequality follows from the union bound, and the third inequality follows from Hoeffding's inequality. Therefore,
\begin{align*}
    \mathcal{O}\!\left(2m\exp\!\left(-\frac{2\tau(\frac{\epsilon}{m})^2}{r^2}\right)\right)
    &=\mathcal{O}\!\left(2m\exp\!\left(-\frac{2\tau\epsilon^2}{m^2r^2}\right)\right) \\
    &=\mathcal{O}\!\left(m \cdot \exp\!\left(-\frac{\tau}{m^2}\right)\right).
\end{align*}
Thus, we complete the proof.
\end{proof}

\begin{algorithm}[t]
\caption{Approximation of DAG-SHAP.}\label{Alg:approximation-DAG-SHAP}
\begin{algorithmic}[1]
\REQUIRE Graph $\mathcal{G}$, vertices $\sV$, edges $E$, explained input $\vx$, sampling number $T$.
\ENSURE Approximate attribution values of each edge and each feature.
\STATE Initialize counter $cnt \gets 0$;
\FOR{$t = 1$ to $T$}
\STATE  Let $\pi$ be a random permutation of $E$;
\IF{$\pi$ is a valid toposort}
\STATE $u \gets 0$, $cnt \gets cnt + 1$;
\FOR{$i = 1$ to length of $\pi$}
\STATE Obtain $\overline{\rvx}_{\sV_\mathcal{S}}$ by intervening on edges $\{\pi(1), \cdots, \pi(i)\}$ as described in Algorithm~\ref{Alg:Edge_intervention};
\STATE $\overline{\Psi(\pi(i))} \gets  \overline{\Psi(\pi(i))} + f(\overline{\rvx}_{\sV_\mathcal{S}},\overline{\rvx}_{\mathbb{D}_\mathcal{S}})-u$;
\STATE $u \gets f(\overline{\rvx}_{\sV_\mathcal{S}},\overline{\rvx}_{\mathbb{D}_\mathcal{S}})$;
\ENDFOR
\ENDIF
\ENDFOR
\FOR{$i = 1$ to $m$}
\STATE $\overline{\Psi(i)} \gets \overline{\Psi(i)} / cnt$;
\ENDFOR
\FOR{$i = 1$ to $n$}
\STATE Get $\overline{\Phi(i)}$ by summing the attribution values of \\ its outgoing edges;
\ENDFOR
\RETURN $\overline{\Psi(1)}$, $\cdots$, $\overline{\Psi(m)}$, $\overline{\Phi(1)}$, $\cdots$, $\overline{\Phi(n)}$.
\end{algorithmic}
\end{algorithm}

\section{Experiments}\label{sec:experiment}
To show the superiority of DAG-SHAP over existing methods, we conduct experiments on both synthetic and real datasets. We present the attribution values of the benchmark, DAG-SHAP, and the baseline algorithms using bar charts. Additionally, we calculate the mean absolute error between DAG-SHAP and the benchmark, as well as between each baseline algorithm and the benchmark. We utilize a Mixture Density Network to predict the distribution of child vertices after intervention on parent vertices within the input features, consistent with the approach used in causal Shapley value~\cite{DBLP:conf/nips/HeskesSBC20}. We experiment with two models for predicting the target feature of the input features: a DNN and an XGBoost model. The DNN and XGBoost attribution results are reported in the corresponding experimental subsections. Since the attribution scores for each method are obtained via approximation, we further analyze the approximation error in Section~\ref{sec:efficiencyexp} and the scalability of DAG-SHAP in Section~\ref{sec:scalabilityexp}.

\begin{figure}[t]
\centering
    \subfigure[Synthetic dataset.]{
        \begin{minipage}[b]{0.18\textwidth}
        \begin{tikzpicture}[node distance=1.1cm, on grid, auto]
            \node[draw, circle, thick] (X1) {$X_1$};
            \node[draw, circle, thick] (X2) [right=of X1] {$X_2$};
            \node[draw, circle, thick] (X3) [below left=of X1] {$X_3$};
            \node[draw, circle, thick] (X4) [below right=of X2] {$X_4$};
            \node[draw, circle, thick] (Y) [below =of X3, xshift=1.35cm] {$Y$};
            
            \path[->,thick] 
            (X1) edge (X3)
            (X1) edge (X4)
            (X1) edge (Y)
            (X2) edge (X3)
            (X2) edge (X4)
            (X2) edge (Y)
            (X3) edge (Y)
            (X4) edge (Y);
        \end{tikzpicture}
        \label{fig:DAG_synthetic}
    \end{minipage}%
    }
    \hspace{0.02\textwidth}
    \subfigure[Splitted synthetic dataset.]{
        \begin{minipage}[b]{0.23\textwidth}
        \begin{tikzpicture}[node distance=0.95cm, on grid, auto]
            \node[draw, circle, thick,inner sep=1.5pt] (X12) {$\tilde{X}_1^2$};
            \node[draw, circle, thick,inner sep=1.5pt] (X11) [below left=of X12,yshift=-0.25cm]{$\tilde{X}_1^1$};
            \node[draw, circle, thick,inner sep=1.5pt] (X13) [below right=of X12,yshift=0.25cm]{$\tilde{X}_1^3$};
            \node[draw, circle, thick,inner sep=1.5pt] (X23) [right=of X13] {$\tilde{X}_2^3$};
            \node[draw, circle, thick,inner sep=1.5pt] (X22) [above right=of X23,,yshift=-0.25cm] {$\tilde{X}_2^2$};
            \node[draw, circle, thick,inner sep=1.5pt] (X21) [below right=of X22,yshift=-0.25cm] {$\tilde{X}_2^1$};
            
            \node[draw, circle, thick,inner sep=1.5pt] (X3) [below=of X12,yshift=-0.35cm] {$\tilde{X}_3$};
            \node[draw, circle, thick,inner sep=1.5pt] (X4) [below=of X22,yshift=-0.35cm] {$\tilde{X}_4$};
            \node[draw, circle, thick] (Y) [below=of X13, xshift=0.50cm,yshift=-0.75cm] {$Y$};
            
            \path[->,thick] 
            (X12) edge (X3)
            (X22) edge (X4)
            (X13) edge (X4)
            (X23) edge (X3)
            (X3) edge (Y)
            (X4) edge (Y);
            \path[->, bend left, thick] 
            (X21) edge (Y);
            \path[->,bend right, thick]
            (X11) edge (Y);
        \end{tikzpicture}
        \label{fig:DAG_splitted_synthetic}
    \end{minipage}%
    }
    \caption{DAGs of the synthetic datasets.} 
\end{figure}

\subsection{Experiments on Synthetic Dataset}\label{sec:syntheticexp}
\partitle{Data Generation Process} Consider a synthetic dataset consisting of four input features, \(X_1\), \(X_2\), \(X_3\), and \(X_4\), along with a target feature \(Y\). The features \(X_1\) and \(X_2\) are independently sampled from two exogenous continuous random variables, \(\rvx_1\) and \(\rvx_2\), respectively. Both \(\rvx_1\) and \(\rvx_2\) follow a uniform distribution over the interval \([0, 10]\), i.e., \(\rvx_1, \rvx_2 \sim U(0, 10)\).
$X_3$ is influenced by $X_1$, $X_2$, and an additional exogenous variable $\rvx_3 \sim U(0, 10)$, and given by $X_3 = X_1 + X_2 +\rvx_3$. Similarly, $X_4$ is influenced by $X_1$, $X_2$, and another exogenous variable $\rvx_4 \sim U(0,10)$, and given by $X_4 = X_1 + X_2 +\rvx_4$. Finally, the target feature $Y$ is given by $Y = X_1 \cdot X_3 + X_2 \cdot X_4$.
The causal structure of this data generation process is shown in Figure~\ref{fig:DAG_synthetic}. 

\partitle{Desirable Attribution} To get desirable benchmark attribution, we conduct vertex splitting for the causal structure in Figure~\ref{fig:DAG_splitted_synthetic}. Specifically, we split the influence of \(X_1\) and \(X_2\) on \(X_3, X_4, \text{and}\ Y\).
Denote \(\tilde{X}_1^1\), \(\tilde{X}_1^2\), and \(\tilde{X}_1^3\) as copies of \(X_1\), while \(\tilde{X}_2^1\), \(\tilde{X}_2^2\), and \(\tilde{X}_2^3\) are copies of \(X_2\). The features \(\tilde{X}_3\) and \(\tilde{X}_4\) correspond to \(X_3\) and \(X_4\), respectively. Specifically, \(\tilde{X}_3\) can be expressed as \(\tilde{X}_3 = \tilde{X}_1^2 + \tilde{X}_2^2 + \tilde{\rvx}_3\) and \(\tilde{X}_4 = \tilde{X}_1^3 + \tilde{X}_2^3 + \tilde{\rvx}_4\), where \(\tilde{\rvx}_3 = \rvx_3\) and \(\tilde{\rvx}_4 = \rvx_4\). The target feature is defined as \(\tilde{Y} = \tilde{X}_1^1 \cdot \tilde{X}_3 + \tilde{X}_2^1 \cdot \tilde{X}_4\). The DAG is illustrated in Figure ~\ref{fig:DAG_splitted_synthetic}, following the data generation process. A key advantage of this split dataset is that the attributions of edges and vertices are equal as each vertex has only one outgoing edge, thereby eliminating externality issues in asymmetry causal Shapley value. Thus, asymmetric causal Shapley value can satisfy all desirable properties in Section~\ref{sec:method}. We train the used model to predict labels based on features \(X_1, X_2, X_3, \text{and}\ X_4\). This model can also predict labels when provided with values of \(\tilde{X}_1^1, \tilde{X}_2^1, \tilde{X}_3, \text{and}\ \tilde{X}_4\). Attribution is conducted using the model on \(\tilde{\bm{x}}\) with asymmetric causal Shapley value, where the values of \(\tilde{X}_1^2, \tilde{X}_2^2, \tilde{X}_1^3, \tilde{X}_2^3\) are not model inputs but serve to intervene on \(\tilde{X}_1^1, \tilde{X}_2^1, \tilde{X}_3, \tilde{X}_4\). The attribution results can benchmark the input tuple \(\bm{x}\) in the original dataset since \(\bm{x}_1 = \tilde{\bm{x}}_1^1 = \tilde{\bm{x}}_1^2 = \tilde{\bm{x}}_1^3\) and \(\bm{x}_2 = \tilde{\bm{x}}_2^1 = \tilde{\bm{x}}_2^2 = \tilde{\bm{x}}_2^3\). This is because the sum of attribution values for \(\tilde{\bm{x}}_1^1, \tilde{\bm{x}}_1^2, \tilde{\bm{x}}_1^3\) should be equal to the attribution value of \(\bm{x}_1\) since \(\tilde{\bm{x}}_1^1, \tilde{\bm{x}}_1^2, \tilde{\bm{x}}_1^3\) essentially represent the decomposition of \(\bm{x}_1\)'s influence on $Y$, and similarly for \(\tilde{\bm{x}}_2^1, \tilde{\bm{x}}_2^2, \tilde{\bm{x}}_2^3\) with \(\bm{x}_2\). Moreover, the attribution value of \(\tilde{\bm{x}}_3\) must be equal to that of \(\bm{x}_3\), and likewise for \(\tilde{\bm{x}}_4\) and \(\bm{x}_4\), in accordance with implementation invariance~\cite{sundararajan2017axiomatic} as they map the original input. 

\begin{figure}[htbp]
    \includegraphics[width=0.48\textwidth, height=0.2\textheight,trim=2cm 0 0 0, clip]{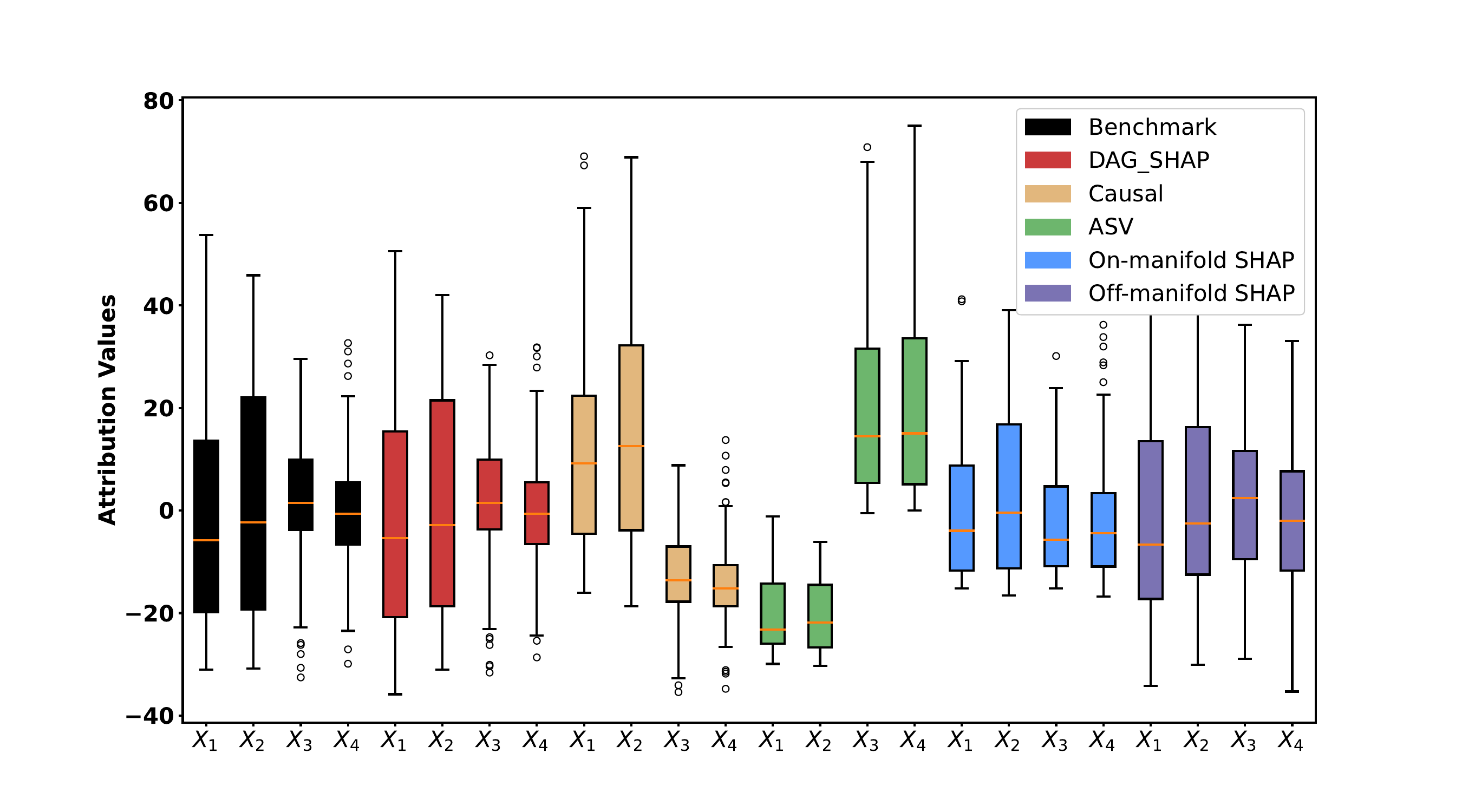}
    \caption{Attribution results of the synthetic dataset (DNN).}
    \label{fig:synthetic_results}
\end{figure}

\begin{table}[htbp]
\centering
\caption{Mean absolute error (MAE) on the synthetic dataset.}
\label{tab:mae_synthetic_results}
\sisetup{round-mode=places,round-precision=2}

\begin{tabular}{@{}l S[table-format=2.2] S[table-format=2.2] c@{}}
\toprule
\textbf{Algorithm} & {\textbf{MAE}} & {\textbf{$\times$ DAG-SHAP}} & \textbf{Rank} \\
\midrule
DAG-SHAP & 6.84  & 1.00  & 1 \\
Off-SHAP & 19.80 & 2.89  & 2 \\
On-SHAP  & 31.02 & 4.53  & 3 \\
Causal   & 58.75 & 8.59  & 4 \\
ASV      & 78.92 & 11.54 & 5 \\
\bottomrule
\end{tabular}
\end{table}

\partitle{Experimental results} ASV represents both the asymmetric Shapley value and the asymmetric causal Shapley value among the baseline algorithm, as they are equivalent given a DAG structure. To simplify, we use `Causal' to denote the symmetric causal Shapley value in the legend of Figure~\ref{fig:synthetic_results}. We omit the comparison with Shapley Flow and Recursive Shapley value because every feature is assumed to have an exogenous contribution in both the synthetic dataset and the real dataset used. The axis labels $X_1, X_2,X_3, X_4$ of benchmark correspond to the sum of the split nodes. Based on the data generation process, it is clear that the influence of $X_1$ and $X_2$ on $Y$ should be greater than that of $X_3$ and $X_4$. That is, the range of attribution values for features $X_1$ and $X_2$ should be greater than $X_3$ and $X_4$. However, the results from off-manifold SHAP, on-manifold SHAP, and asymmetry SHAP with a DNN contradict this as shown in Figure~\ref{fig:synthetic_results} , as they fail to identify the impact of source vertices on their child vertices. We also conduct a statistical analysis of the attribution error for each method.  The experimental results show that the DAG-SHAP values calculated on Figure~\ref{fig:DAG_synthetic} are approximately equal to the sum of the DAG-SHAP values calculated on Figure~\ref{fig:DAG_splitted_synthetic}, and are also approximately equal to the attribution values in benchmark. The error of other methods is significantly greater than that of DAG-SHAP as shown in Table~\ref{tab:mae_synthetic_results}. The attribution results using the XGBoost model on the synthetic dataset and the mean absolute error between the results and the benchmark are shown in Figure~\ref{fig:synthetic_xgboost_results} and Table~\ref{tab:synthetic_xgboost_errors}, respectively.

\begin{figure}[htbp]
    \includegraphics[width=0.48\textwidth, height=0.2\textheight]{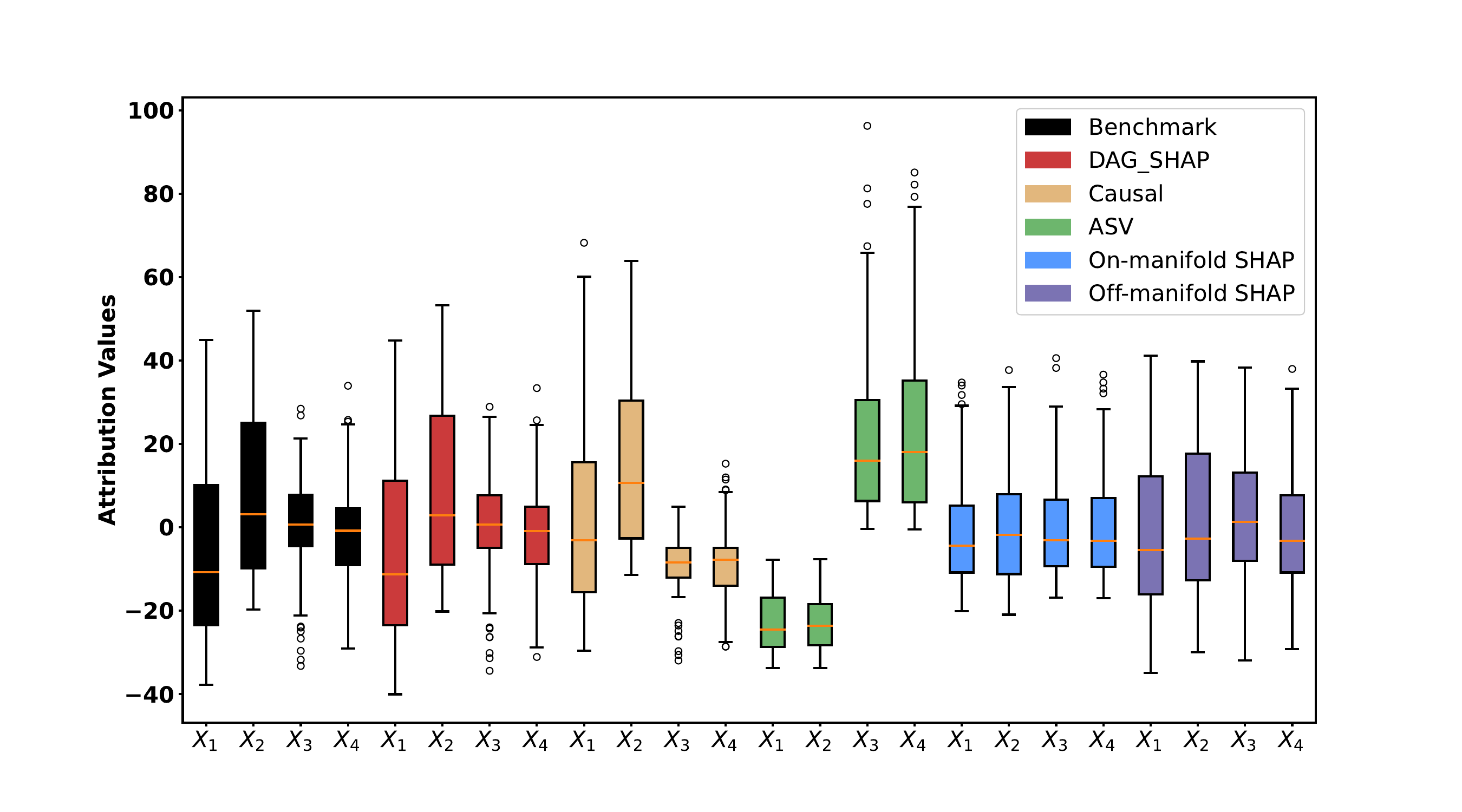}
        \caption{Attribution results of the synthetic dataset (XGBoost).}
        \label{fig:synthetic_xgboost_results}
\end{figure}


\begin{table}[htbp]
\centering
\caption{MAE on the synthetic dataset (XGBoost).}
\label{tab:synthetic_xgboost_errors}
\vspace{-0.5em}
\sisetup{round-mode=places,round-precision=2}
\begin{tabular}{@{}l S[table-format=1.2] S[table-format=2.2] c@{}}
\toprule
\textbf{Algorithm} & {\textbf{MAE}} & {\textbf{$\times$ DAG-SHAP}} & \textbf{Rank} \\
\midrule
DAG-SHAP & 4.21 & 1.00 & 1\\
Off-SHAP & 21.3 & 5.06 & 2\\
On-SHAP  & 32.15 & 7.64 & 3\\
Causal   & 33.98 & 8.07 & 4\\
ASV      & 94.09 & 22.35 & 5\\
\bottomrule
\end{tabular}
\end{table}

\subsection{Experiments on Real Datasets}

\begin{figure*}[htbp]
    \centering
    \includegraphics[width=\textwidth, height=0.2\textheight]{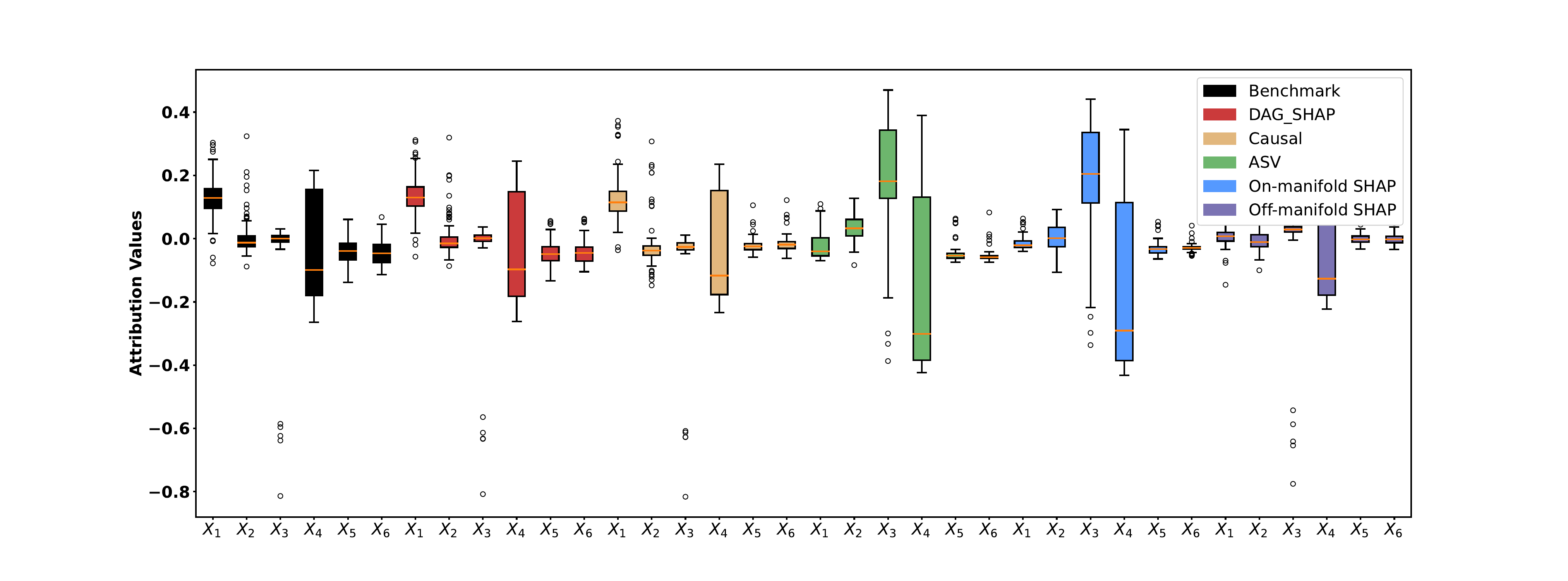}
    \vspace{-3em}
    \caption{Attribution results of Census dataset (DNN).}
    \label{fig:census_results}
\end{figure*}

\begin{table}[htbp]
\centering
\caption{MAE on the Census dataset (DNN).}
\label{tab:mae_census_results}
\sisetup{round-mode=places,round-precision=2}
\begin{tabular}{@{}l S[table-format=1.2] S[table-format=1.2] c@{}}
\toprule
\textbf{Algorithm} & {\textbf{MAE}} & {\textbf{$\times$ DAG-SHAP}} & \textbf{Rank} \\
\midrule
DAG-SHAP & 0.12 & 1.00 & 1 \\
Causal   & 0.21 & 1.75 & 2 \\
Off-SHAP & 0.29 & 2.42 & 3 \\
On-SHAP  & 0.61 & 5.08 & 4 \\
ASV      & 0.63 & 5.25 & 5 \\
\bottomrule
\end{tabular}
\end{table}

\begin{figure}[htbp]
    \centering
			\begin{tikzpicture}[node distance=1.3cm, on grid, auto]
                    \node[draw, ellipse,thick, minimum width=1.5cm, minimum height=0.75cm] (Country) at (0,-0.5) {$\textit{Country}$};
                    \node[draw, ellipse, thick, minimum width=1.5cm, minimum height=0.75cm] (Race) at (2,-0.5) {$\textit{Race}$};
                    \node[draw, ellipse,thick, minimum width=1.5cm, minimum height=0.75cm] (Age) at (4.1,-0.5) {$\textit{Age}$};
                    \node[draw, ellipse, thick, minimum width=1.5cm, minimum height=0.75cm] (Occ) at (0,-1.8) {$\textit{Occ}$};
                    \node[draw, ellipse, thick, minimum width=1.5cm, minimum height=0.75cm] (Capg) at (2,-1.8) {$\textit{Capg}$};
                    \node[draw, ellipse, thick, minimum width=1.5cm, minimum height=0.75cm] (MStat) at (4,-1.8) {$\textit{MS}$};
                    \node[draw, ellipse, thick, minimum width=1.5cm, minimum height=0.75cm] (Income) at (2,-3) {$\textit{Income}$};
            
                    \path[->, thick] 
                    (Country) edge (Occ)
                    (Country) edge (Capg)
                    (Country) edge (MStat)
                    (Race) edge (Occ)
                    (Race) edge (Capg)
                    (Race) edge (MStat)
                    (Age) edge (Occ)
                    (Age) edge (Capg)
                    (Age) edge (MStat)
                    (Occ) edge (Income)
                    (Capg) edge (Income)
                    (MStat) edge (Income);
                    \path[->, thick, bend left = 120] 
                    (Age) edge (Income);
                    \path[->,thick,bend left=60]
                    (Race) edge (Income);
                    \path[->,thick,bend right = 120]
                    (Country) edge (Income);
                \end{tikzpicture}
                \caption{DAG of the Census Income dataset.}\label{fig:DAG_census}
\end{figure}


\partitle{Experiments on Census Income Dataset} The first real dataset we used is the Adult dataset~\cite{Dua:2019} with the causal graph shown in Figure~\ref{fig:DAG_census}. We train a binary classifier to predict whether the income of one individual exceeds \$50,000 per year. We also split the direct and indirect effects of Country, Race, and Age to create a new dataset. We use the same way in the above experiments to get benchmark attribution values. We show the attribution results with DNN in Figure~\ref{fig:census_results}, where $X_1, \cdots, X_6$ represent Race, Country, Age, Occupation, Marital Status, and Capital Gain, respectively. The experimental results show that the attribution value of DAG-SHAP is the closest to the benchmark, with the mean absolute error only $57.1\%$ of that of the second smallest method, causal Shapley value, as shown in Table~\ref{tab:mae_census_results}. The attribution results using the XGBoost model on the Census dataset and the mean absolute error between the results and the benchmark are shown in Figure~\ref{fig:census_xgboost_results} and Table~\ref{tab:census_xgboost_errors}, respectively.

\begin{figure*}[htbp]
    \centering
    \includegraphics[width=\textwidth, height=0.2\textheight]{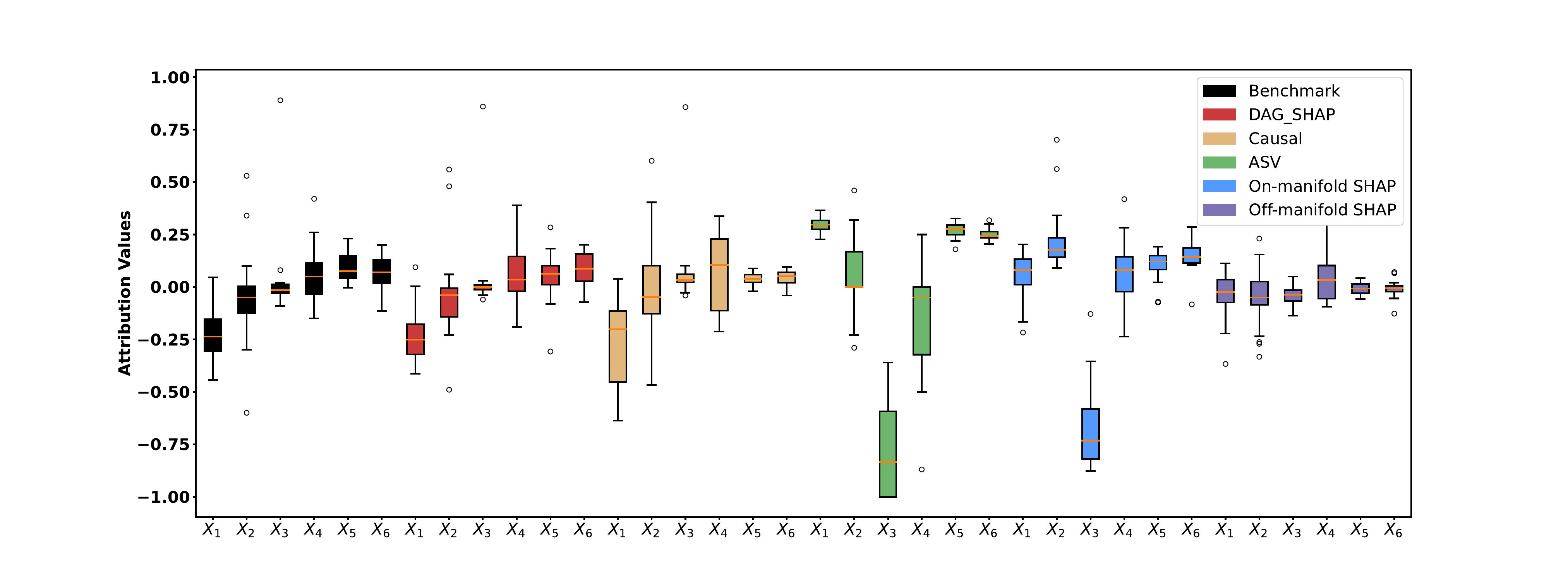}
        \caption{Attribution results of the Census dataset (XGBoost).}
        \label{fig:census_xgboost_results}
\end{figure*}

\begin{table}[htbp]
\centering
\caption{MAE on the Census dataset (XGBoost).}
\label{tab:census_xgboost_errors}
\sisetup{round-mode=places,round-precision=2}
\begin{tabular}{@{}l S[table-format=1.2] S[table-format=2.2] c@{}}
\toprule
\textbf{Algorithm} & {\textbf{MAE}} & {\textbf{$\times$ DAG-SHAP}} & \textbf{Rank} \\
\midrule
DAG-SHAP &  0.40 & 1.00 & 1\\
Causal   &  0.52 & 1.30 & 2\\
Off-SHAP &  0.59 & 1.48 & 3\\
On-SHAP  &  1.59 & 3.98 & 4\\
ASV      &  2.11 & 5.28 & 5\\
\bottomrule
\end{tabular}
\end{table}


\begin{figure}[htbp]
    \raggedright  
    \includegraphics[width=0.48\textwidth, height=0.2\textheight,trim=2cm 0 0 0, clip]{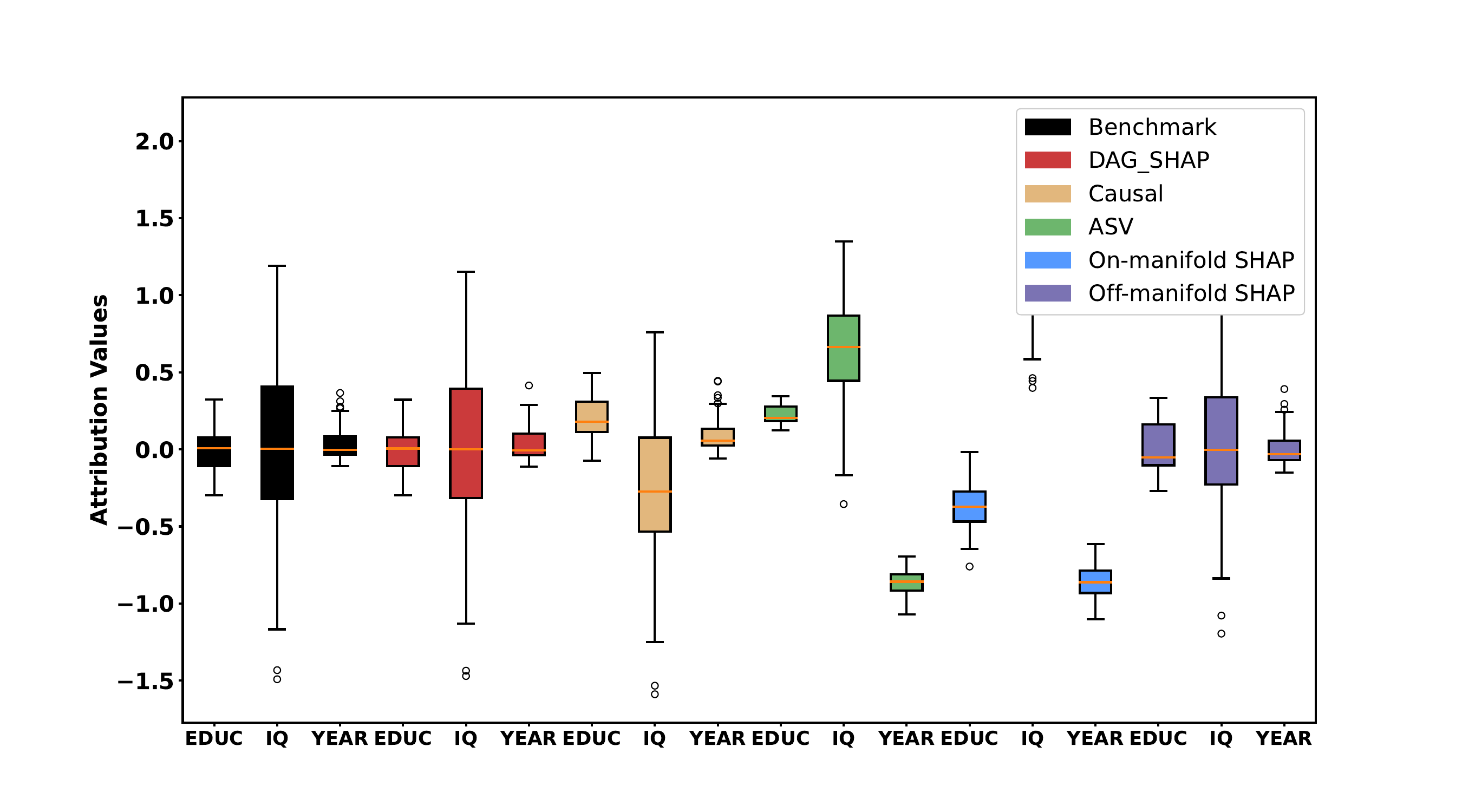}
    \caption{Attribution results of Griliches76 dataset.} \label{fig:griliches_results}
\end{figure}

\begin{table}[htbp]
\centering
\caption{MAE on the Griliches76 dataset (DNN).}
\label{tab:mae_griliches76_results}
\vspace{-0.5em}
\sisetup{round-mode=places,round-precision=2}
\begin{tabular}{@{}l S[table-format=1.2] S[table-format=2.2] c@{}}
\toprule
\textbf{Algorithm} & {\textbf{MAE}} & {\textbf{$\times$ DAG-SHAP}} & \textbf{Rank} \\
\midrule
DAG-SHAP & 0.08 & 1.00  & 1 \\
Off-SHAP & 0.27 & 3.38  & 2 \\
Causal   & 0.52 & 6.50  & 3 \\
ASV      & 1.76 & 22.00 & 4 \\
On-SHAP  & 2.44 & 30.50 & 5 \\
\bottomrule
\end{tabular}
\end{table}

\begin{figure}[h]
    \subfigure[Griliches76.]{
        \begin{minipage}[b]{0.20\textwidth}
        \begin{tikzpicture}[ node distance=1.3cm, on grid, auto]
            \node[draw, circle, thick] (IQ) {$IQ$};
            \node[draw, circle, thick] (EDU) [below right=of IQ] {$EDU$};
            \node[draw, circle,thick] (YEAR) [below left=of IQ] {$YEAR$};
            \node[draw, circle, thick] (LW) [below right=of YEAR] {$LW$};
            
            \path[->,thick] 
            (IQ) edge (EDU)
            (EDU) edge (LW)
            (YEAR) edge (LW)
            (IQ) edge (LW);
        \end{tikzpicture}
        \label{fig:DAG_griliches76}
    \end{minipage}%
    }
    \hspace{0.01\textwidth}
    \subfigure[Splitted Griliches76.]{
        \begin{minipage}[b]{0.22\textwidth}
        \begin{tikzpicture}[ node distance=1.3cm, on grid, auto]
            \node[draw, circle,thick,inner sep=2pt] (IQtilde) {$\tilde{IQ}$};
            \node[draw, circle,thick,inner sep=2pt] (EDU) [below right=of IQtilde] {$EDU$};
            \node[draw, circle,thick,inner sep=2pt] (YEAR) [below left=of IQtilde] {$YEAR$};
            \node[draw, circle,thick,inner sep=2pt] (LW) [below right=of YEAR] {$LW$};
            \node[draw, circle,thick] (IQbar) [right=of IQtilde,xshift=0.3cm] {$\bar{IQ}$};
            
            \path[->,thick] 
            (IQtilde) edge (LW)
            (EDU) edge (LW)
            (YEAR) edge (LW)
            (IQbar) edge (EDU);
        \end{tikzpicture}
        \label{fig:DAG_splitted_griliches76}
    \end{minipage}
    }

    \caption{DAGs of the Griliches76 datasets.}
    \vspace{-1em}
\end{figure}

\partitle{Experiments on Griliches76 Dataset} The second real dataset we used is the Griliches76 dataset~\cite{griliches1976wages}, gathered from the U.S. labor market. This dataset is widely used in research to explore the impact of features on income. We selected three features: IQ, education level (EDU), and years working at the current unit (YEAR). The target variable is the logarithm of weekly income (LW). We use the same way to create the benchmark as the synthetic dataset. The original causal graph is shown in Figure~\ref{fig:DAG_griliches76} and the split graph is shown in Figure~\ref{fig:DAG_splitted_griliches76}. We train an ML model with input features IQ, EDU, and YEAR. The attribution results with a DNN of each method are shown in Figure~\ref{fig:griliches_results}. The experimental results show that the attribution value of DAG-SHAP is the closest to the benchmark, with the MAE only $29.6\%$ of that of the second smallest method, Off-manifold SHAP, as shown in Table~\ref{tab:mae_griliches76_results}. The attribution results using the XGBoost model on the Griliches76 dataset and the mean absolute error between the results and the benchmark are shown in Figure~\ref{fig:griliches_xgboost_results} and Table~\ref{tab:griliches_xgboost_errors}, respectively.

\begin{figure}[htbp]
   \includegraphics[width=0.48\textwidth, height=0.2\textheight]{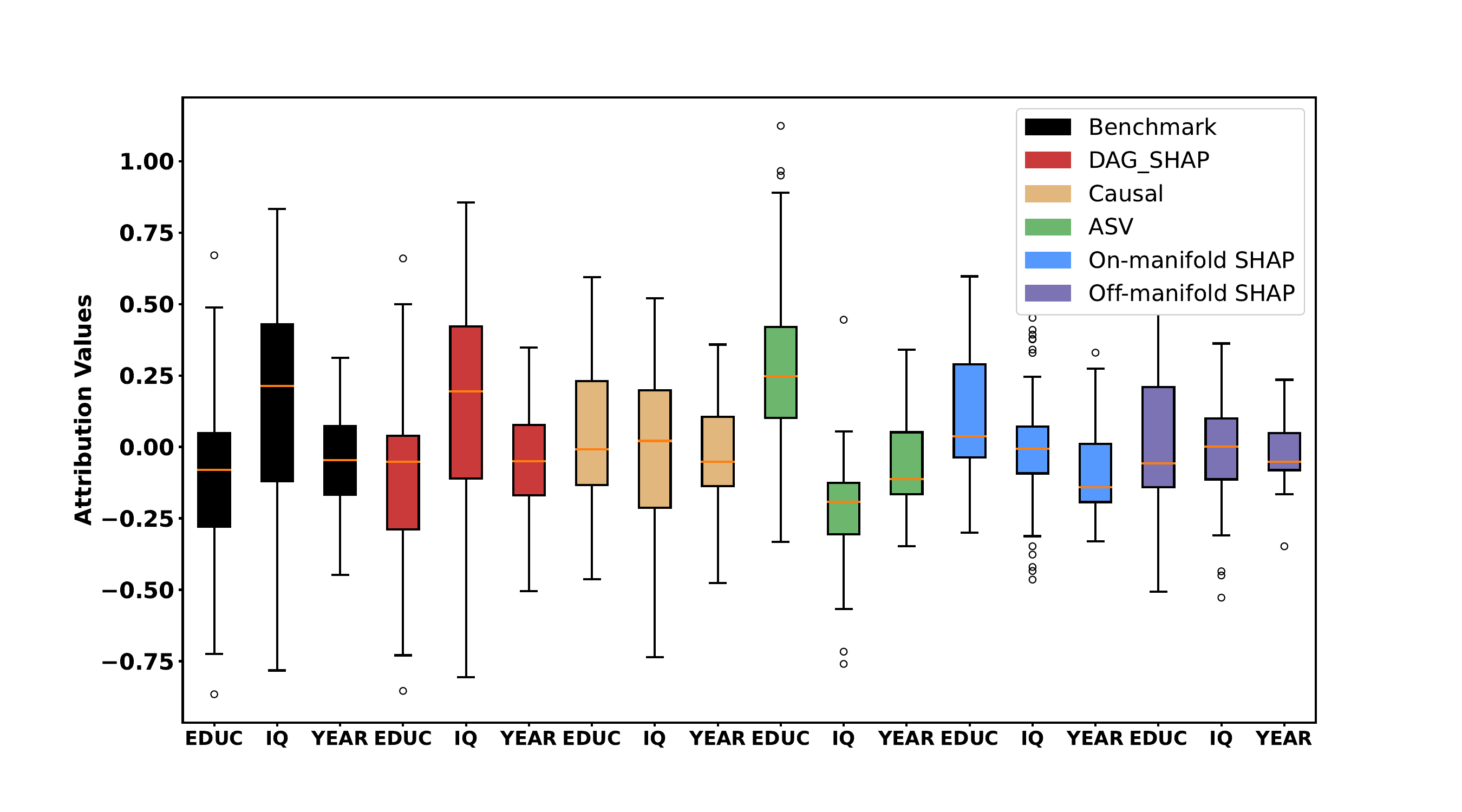}
        \caption{Attribution results of the Griliches76 dataset (XGBoost).}
        \label{fig:griliches_xgboost_results}
\end{figure}


\begin{table}[htbp]
\centering
\caption{MAE on the Griliches76 dataset (XGBoost).}
\label{tab:griliches_xgboost_errors}
\vspace{-0.5em}
\sisetup{round-mode=places,round-precision=2}
\begin{tabular}{@{}l S[table-format=1.2] S[table-format=2.2] c@{}}
\toprule
\textbf{Algorithm} & {\textbf{MAE}} & {\textbf{$\times$ DAG-SHAP}} & \textbf{Rank} \\
\midrule
DAG-SHAP & 0.05 & 1.00 & 1\\
Causal   & 0.41 & 8.20 & 2\\
Off-SHAP & 0.58 & 11.60 & 3\\
On-SHAP  & 0.67 & 13.40 & 4\\
ASV      & 0.95 & 19.00 & 5\\
\bottomrule
\end{tabular}
\end{table}

\subsection{Approximation Error Evaluation} \label{sec:efficiencyexp}
The attribution values in our experiments are estimated by approximate sampling methods. Therefore, the MAE between an attribution method and the benchmark may include both the intrinsic discrepancy of the method and the error introduced by sampling. To examine whether the observed superiority of DAG-SHAP is mainly caused by sampling variance, we conduct an additional repeated-sampling experiment.

In the above experiments, 200 permutations are used for sampling when attributing each explained data point. We independently repeat the attribution process twice for each explained data point, with each attribution method still using 200 permutations, and compute the MAE between the two independently estimated attribution results. This repeated-sampling MAE is regarded as an estimate of the approximation error. Let \(E_{\mathrm{bench}}^{d,m}\) denote the MAE between method \(m\) and the benchmark under dataset/model setting \(d\), and let \(E_{\mathrm{approx}}^{d,m}\) denote the corresponding repeated-sampling approximation error. We report the adjusted MAE as
\[
E_{\mathrm{adj}}^{d,m}
=
\bigl|E_{\mathrm{bench}}^{d,m}-E_{\mathrm{approx}}^{d,m}\bigr|.
\]
The adjusted MAE values are shown in Table~\ref{table:MSE}. The results indicate that, even after excluding the estimated approximation error, DAG-SHAP still has the smallest adjusted MAE across all datasets and model architectures. This suggests that the advantage of DAG-SHAP mainly comes from its attribution mechanism rather than from sampling approximation effects.

\begin{table*}[t]
\centering
\small
\setlength{\tabcolsep}{4pt}
\caption{Adjusted MAE after excluding the estimated approximation error.}
\label{table:MSE}
\begin{tabular}{@{}lcccccc@{}}
\toprule
\textbf{Algorithm}
& \textbf{Synthetic-DNN}
& \textbf{Griliches-DNN}
& \textbf{Census-DNN}
& \textbf{Synthetic-XGB}
& \textbf{Griliches-XGB}
& \textbf{Census-XGB} \\
\midrule
DAG-SHAP & 3.123  & 0.048 & 0.038 & 0.297  & 0.015 & 0.267 \\
Causal   & 56.829 & 0.495 & 0.146 & 31.083 & 0.390 & 0.431 \\
ASV      & 75.985 & 1.743 & 0.555 & 90.526 & 0.915 & 1.988 \\
On-SHAP  & 27.698 & 2.419 & 0.554 & 28.809 & 0.641 & 1.519 \\
Off-SHAP & 16.812 & 0.254 & 0.202 & 18.374 & 0.562 & 0.492 \\
\bottomrule
\end{tabular}
\end{table*}

\subsection{Efficiency Analysis} \label{sec:scalabilityexp}
When dealing with large datasets, parallel computing can effectively accelerate feature attribution calculations. Feature attribution for different data points can be executed in parallel. Additionally, within the feature attribution process for a single data point, sampling different permutations can also be executed in parallel. We extend the synthetic dataset in Section~\ref{sec:syntheticexp} such that each data point has 100 features and the DAG contains 200 edges. Specifically, we replicate $X_1$, $X_2$, $X_3$, and $X_4$ from the synthetic dataset 25 times, resulting in 100 features in total. Each replicated feature maintains its causal relationship with $Y$, ensuring that the collective contribution of all features to $Y$ remains consistent with the contribution in the original structure. We use the same neural network structure as in Section~\ref{sec:syntheticexp} and conduct experiments on a server equipped with two AMD EPYC 9754 128-Core Processors, providing a total of 512 logical processors. For each data point, we conduct two independent rounds of sampling and compare the mean absolute error (MAE) between the sampling results. The experimental results show that with 128 sampling permutations per data point, the MAE is 5.17\%, which is significantly smaller than the errors caused by different feature attribution algorithms. For instance, the absolute error between Off-SHAP, which has the smallest MAE among the baselines, and the benchmark is 17.24\%. Using 128 threads for parallel sampling of 128 permutations, the computation time is only 57.5 seconds. Additionally, we conduct experiments with 256 and 384 permutations, resulting in mean absolute errors of 4.23\% and 3.71\%, respectively.

\section{Conclusion}\label{sec:conclusion}
In this paper, we propose an innovative feature attribution method that incorporates causal relationships within Directed Acyclic Graphs (DAGs). We introduce an edge intervention method that targets specific child vertices via their parent vertices to control the fine-grained causal influence. Our method uniquely addresses the limitations of traditional feature attribution techniques by capturing both the exogenous contributions of edges and the requisite externality contributions simultaneously. Extensive experiments on both real and synthetic datasets validate the effectiveness of DAG-SHAP. These results highlight the value of edge-level explanations and suggest extending DAG-SHAP to time-dependent settings.


\newpage

\IEEEtriggeratref{44}
\bibliographystyle{abbrv}
\bibliography{refs}


\onecolumn
\section*{Appendix}

\renewcommand{\thesubsection}{\Alph{section}.\arabic{subsection}}

In the appendix of our paper, we provide comprehensive additional content. Section~\ref{sec:mostrealted} introduces the Shapley value-based feature attribution methods considered in this paper. In Section~\ref{sec:appendix_properties}, we discuss why DAG-SHAP satisfies all the desirable properties. Section~\ref{sec:comparasion_asy_Intervention} presents the comparison of different intervention methods.



\section{Shapley Value-Based Feature Attribution Methods}\label{sec:mostrealted}

In this section, we provide a detailed introduction to the baselines and the comparative methods used in our study. 

\textbf{On-manifold Shapley} value was proposed by~\cite{scott2017unified}, and it is defined as follows
\begin{align*}
\phi_i=\sum_{j=0}^{n-1} \frac{1}{n\binom{n-1}{j}} \sum_{\substack{i \notin \mathcal{S}, |\mathcal{S}|=j}}\{ \mathbb{E}\left[f\left(\vx_{\mathcal{S} \cup\{i\}}, \rvx_{\overline{\mathcal{S} \cup\{i\}}}\right) \mid \rvx_{\mathcal{S} \cup\{i\}}=\vx_{\mathcal{S} \cup\{i\}}\right]-\mathbb{E}\left[f\left(\vx_\mathcal{S}, \rvx_{\bar{\mathcal{S}}}\right) \mid \rvx_\mathcal{S}=\vx_\mathcal{S} \right]\}.
\end{align*}
The marginal contribution of feature $i$ in the explained input $\vx$ when cooperating with a coalition $\mathcal{S}$ is expressed as 
$$\mathbb{E}\left[f\left(\vx_{\mathcal{S} \cup\{i\}}, \rvx_{\overline{\mathcal{S} \cup\{i\}}}\right) \mid \rvx_{\mathcal{S} \cup\{i\}}=\vx_{\mathcal{S} \cup\{i\}}\right]-\mathbb{E}\left[f\left(\vx_\mathcal{S}, \rvx_{\bar{\mathcal{S}}}\right) \mid \rvx_\mathcal{S}=\vx_\mathcal{S} \right],$$ where $\overline{\mathcal{S}\cup\{i\}}$ represents the complement of $\mathcal{S}\cup\{i\}$. The values of features not in the coalitions are determined under conditional expectations of features in the coalitions.

\textbf{Off-manifold Shapley} value~\cite{scott2017unified} is a simplified version of on-manifold Shapley value that assumes feature independence for implementation, and it is widely adopted in practical applications. The marginal contribution in off-manifold Shapley value is defined as $\mathbb{E}\left[f\left(\vx_{\mathcal{S} \cup\{i\}}, \rvx_{\overline{\mathcal{S} \cup\{i\}}}\right)\right]-\mathbb{E}\left[f\left(\vx_\mathcal{S}, \rvx_{\bar{\mathcal{S}}}\right)\right]$. The values of features $\overline{\mathcal{S}\cup\{i\}}$ and $\overline{\mathcal{S}}$ which are not in the selected coalitions are determined by the average values in the dataset. 

\textbf{Asymmetry Shapley} value~\cite{frye2020asymmetric} sets the weight of permutations that do not satisfy the topological order to zero, thereby breaking the symmetry in Shapley value. This allows for model explainability analysis to distinguish the causal order between features when considering their contributions. It accounts for the causal relationships between features, rather than assuming that all features affect the model's output symmetrically and independently.

\textbf{Causal Shapley} value~\cite{DBLP:conf/nips/HeskesSBC20} incorporates a priori causal knowledge, using interventions in the causal domain to represent the interaction influence between features. The marginal contribution is expressed as $\mathbb{E}\left[f\left(\vx_{\mathcal{S} \cup\{i\}}, \rvx_{\overline{\mathcal{S} \cup\{i\}}}\right) \mid \operatorname{do}\left(\rvx_{\mathcal{S} \cup\{i\}}=\vx_{\mathcal{S} \cup\{i\}}\right)\right]-\mathbb{E}\left[f\left(\vx_\mathcal{S}, \rvx_{\bar{\mathcal{S}}}\right) \mid \operatorname{do}\left(\rvx_\mathcal{S}=\vx_\mathcal{S}\right)\right]$, where operator $\operatorname{do}(\cdot)$ means setting a variable to a specific value. The intervention is conducted on the feature vertex so the influence of the parent vertex will transfer to all its child vertices simultaneously if it is intervened. Note that the causal Shapley value can adopt both symmetric and asymmetric permutations, referred to as symmetry causal Shapley value and asymmetry causal Shapley value, respectively. Asymmetry causal Shapley value is equivalent to the asymmetry Shapley value when DAG is known.

\textbf{Shapley Flow}~\cite{DBLP:conf/aistats/WangWL21} is a method that attributes based on paths, ensuring path order consistency through depth-first search. It sums the attribution values of all paths passing through an edge as the edge's attribution, ensuring that the sum of the attribution values of all edges in any cut in the causal graph is the same. Formally, the Shapley value of path $i$ can be represented as follows
\begin{equation} \nonumber
    \tilde{\phi}_\nu(i)=\sum_{\pi \in \Pi_{\mathrm{dfs}}} \frac{\tilde{v}([j: \pi(j) \leq \pi(i)])-\tilde{v}([j: \pi(j)<\pi(i)])}{\left|\Pi_{\mathrm{dfs}}\right|},
\end{equation}
where inequality $\pi(j) \leq \pi(i)$ denotes that path $j$ precedes path $i$ under ordering $\pi \in \Pi_{\mathrm{dfs}}$. To obtain the attribution of an edge $e$, they propose to sum the attributions of all paths $\sP$ that contains $e$, i.e., $\phi_v(e)=\sum_{p \in \sP, e \in p} \tilde{\phi}_v(p)$.

\textbf{Recursive Shapley} value~\cite{DBLP:conf/icml/SingalMN21} conducts a top-down feature attribution process to quantify how changes at the source vertices propagate through the graph. It assumes that only the vertices with no incoming edges have exogenous contributions, and the relationship between intermediate vertices and the source vertices is deterministic, with intermediate vertices only transmitting the effects of source vertices.

\textbf{Shapley-ICC}~\cite{janzing2024quantifying}. Denote the worth of a coalition of noise terms $N_T$ for $T \subset \{1, \dots, n\}$ by $\nu(T) := -\psi(X_n|N_T)$. Then the (Shapley based) ICC of each node $X_j$ to the uncertainty of $X_n$ reads:
\begin{align}
    ICC^{Sh}_\psi (X_j \rightarrow X_n) &:= \sum_{T \subseteq U \setminus \{j\}} \frac{1}{n\binom{n-1}{|T|}} [ \nu(T \cup \{j\}) - \nu(T)] \\
    &= \sum_{T \subseteq U \setminus \{j\}} \frac{1}{n\binom{n-1}{|T|}} ICC_\psi (X_j \rightarrow X_n | T),
\end{align}
where $U := \{1, \dots, n\}$. $\psi(X_n|N_T)$ can be computed by conditional Shannon entropy or expected conditional variance with a prerequisite that the noise term of each feature can be isolated from parents based on the operations of the structural causal model.

\section{Theoretical Analysis}\label{sec:appendix_properties}
In this section, we prove that DAG-SHAP satisfies the attribution properties defined in Section~\ref{sec:fundamental_properties}.
Although these properties are stated at the feature level in the preliminaries, DAG-SHAP first assigns attributions to edges and then aggregates them to features by $\Phi(k)=\sum_{e\in\mathcal{O}_k}\Psi(e)$.

\subsection{Why DAG-SHAP Satisfies These Properties}
\begin{proof}[Proof of Linearity]
    For any edge $e_i$, its edge intervention causal Shapley value is given by
    \begin{equation}
        \Psi_{f}(e_i) = \sum_{\pi \in \Pi} \frac{1}{|\Pi|}\{\mathbb{E}[f(\rvx)|\operatorname{do}(\rve_{\underline{S}_{\pi}^i}=e_{\underline{\mathcal{S}}_{\pi}^i})]-\mathbb{E}[f(\rvx)|\operatorname{do}(\rve_{\mathcal{S}_{\pi}^i}=e_{\mathcal{S}_{\pi}^i})]\}.
    \end{equation}
    Substituting $f = f_u+f_w$, we can get
    \begin{equation}
        \Psi_{f}(e_i) = \sum_{\pi \in \Pi} \frac{1}{|\Pi|}\{\mathbb{E}[f_u(\rvx)+f_w(\rvx)|\operatorname{do}(\rve_{\underline{S}_{\pi}^i}=e_{\underline{\mathcal{S}}_{\pi}^i})]-\mathbb{E}[f_u(\rvx)+f_w(\rvx)|\operatorname{do}(\rve_{\mathcal{S}_{\pi}^i}=e_{\mathcal{S}_{\pi}^i})]\}.
    \end{equation}
    This can be split into two parts:
    \begin{align}
        \Psi_{f}(e_i)  = & \sum_{\pi \in \Pi} \frac{1}{|\Pi|}\{\mathbb{E}[f_u(\bm{x})|\operatorname{do}(\rve_{\underline{S}_{\pi}^i}=e_{\underline{\mathcal{S}}_{\pi}^i})]-\mathbb{E}[f_u(\bm{x})|\operatorname{do}(\rve_{\mathcal{S}_{\pi}^i}=e_{\mathcal{S}_{\pi}^i})]\} \\
        & +\sum_{\pi \in \Pi} \frac{1}{|\Pi|}\{\mathbb{E}[f_w(\bm{x})|\operatorname{do}(\rve_{\underline{S}_{\pi}^i}=e_{\underline{\mathcal{S}}_{\pi}^i})]-\mathbb{E}[f_w(\bm{x})|\operatorname{do}(\rve_{\mathcal{S}_{\pi}^i}=e_{\mathcal{S}_{\pi}^i})]\}.
    \end{align}
    According to the definition of edge intervention causal Shapley values, the two summations correspond to $\Psi_{f_u}(e_i)$ and $\Psi_{f_w}(e_i)$, respectively. Thus, $\Psi_{f}(e_i) = \Psi_{f_u}(e_i) + \Psi_{f_w}(e_i)$. This proves that  DAG-SHAP satisfies the Linearity property. The proof can be easily extended with respect to the feature node, since the attribution of a feature is the sum of the attribution values of its outgoing edges.
\end{proof}

\begin{proof}[Proof of Implementation Invariance]
If $f(\bm{x}) = g(\bm{x})$ for all inputs $\bm{x}$, the marginal contributions of edge $e_i$ in $f$ and $g$ with any edge subsets $e_{\mathcal{S}_{\pi}^i}$ are equal: $\mathbb{E}[f(\bm{x})|\operatorname{do}(\rve_{\underline{S}_{\pi}^i}=e_{\underline{\mathcal{S}}_{\pi}^i})]-\mathbb{E}[f(\bm{x})|\operatorname{do}(\rve_{\mathcal{S}_{\pi}^i}=e_{\mathcal{S}_{\pi}^i})] = \mathbb{E}[g(\bm{x})|\operatorname{do}(\rve_{\underline{S}_{\pi}^i}=e_{\underline{\mathcal{S}}_{\pi}^i})]-\mathbb{E}[g(\bm{x})|\operatorname{do}(\rve_{\mathcal{S}_{\pi}^i}=e_{\mathcal{S}_{\pi}^i})].$
This follows directly from the fact that $f$ and $g$ produce identical outputs for any subset of features.
The edge intervention causal Shapley value for an edge $i$ is a weighted sum of the marginal contributions across all  permutations within \(\Pi\):
\begin{equation}
 \Psi_f(e_i) = \sum_{\pi \in \Pi} \frac{1}{|\Pi|}\{\mathbb{E}[f(\rvx)|\operatorname{do}(\re_{{\underline{\mathcal{S}}}_{\pi}^i}=e_{{\underline{\mathcal{S}}}_{\pi}^i })]-\mathbb{E}[f(\rvx)|\operatorname{do}(\re_{\mathcal{S}_{\pi}^i}=e_{\mathcal{S}_{\pi}^i})]\}.  
\end{equation}
Substituting the equivalence of marginal contributions:
\begin{equation}
 \Psi_g(e_i) = \sum_{\pi \in \Pi} \frac{1}{|\Pi|}\{\mathbb{E}[g(\rvx)|\operatorname{do}(\re_{{\underline{\mathcal{S}}}_{\pi}^i}=e_{{\underline{\mathcal{S}}}_{\pi}^i })]-\mathbb{E}[g(\rvx)|\operatorname{do}(\re_{\mathcal{S}_{\pi}^i}=e_{\mathcal{S}_{\pi}^i})]\}.  
\end{equation}
Thus, the edge intervention causal Shapley for $f$ and $g$ are identical for all edges $e_i$ when $f(\bm{x}) = g(\bm{x})$ for all $\bm{x}$. This proves that  DAG-SHAP satisfies the Implementation Invariance property. The proof can be easily extended with respect to the feature node too, since the attribution of a feature is the sum of the attribution values of its outgoing edges.
\end{proof}

\begin{proof}[Proof of Sensitivity]
\( \vx \) and \( \vx' \) differ only in one feature, say $k$, and the target label $Y$ for the inputs is different. This implies there is a direct edge between the feature $k$ and the target label, denoted as $e_i$.  Since the feature does not influence other features, no other edges are involved. In each permutation \( \pi \), the marginal contribution of edge \( e_i \) is:
\[
   \Delta_{\pi}^i = \mathbb{E}[f(\rvx) \mid \operatorname{do}(\re_{\underline{\mathcal{S}}_{\pi}^i} = e_{\underline{\mathcal{S}}_{\pi}^i})] - \mathbb{E}[f(\rvx) \mid \operatorname{do}(\re_{\mathcal{S}_{\pi}^i} = e_{\mathcal{S}_{\pi}^i})].
\]
Since $\vx$ and $\vx'$ differ only in $e_i$, and $f(\vx) \neq f(\vx')$, changing $e_i$ while keeping other edges fixed leads to a change in the prediction. Therefore, the marginal contribution satisfies $\Delta_{\pi}^i \neq 0$.

Because the attribution of $e_i$ is non-zero, and the attribution values of all outgoing edges of $k$ (other than $e_i$) are zero (as interventions on these edges from $\vx'$ to $\vx$ do not change the prediction), DAG-SHAP assigns a non-zero attribution to feature $k$. This satisfies the Sensitivity property.
\end{proof}

\begin{proof}[Proof of Dummy]
Since feature \(k\) has no exogenous influence on the outcome, intervening on its outgoing edges does not affect the expected model output. Therefore, for any subset \( \mathcal{S} \subseteq E \setminus \{ e_i \} \) where $i\in\mathcal{O}_k$:
\[
\mathbb{E}\left[ f(\mathbf{x}) \mid \operatorname{do}\left( \re_{\mathcal{S} \cup \{ e_i \}} = e_{\mathcal{S} \cup \{ e_i \}} \right) \right] = \mathbb{E}\left[ f(\mathbf{x}) \mid \operatorname{do}\left( \re_{\mathcal{S}} = e_{\mathcal{S}} \right) \right].
\]
This implies that the marginal contribution of any outgoing edge \( e_i \) is zero:
\[
\Delta_{\pi}^i = \mathbb{E}\left[ f(\mathbf{x}) \mid \operatorname{do}\left( \re_{\underline{\mathcal{S}}_{\pi}^{i}} = e_{\underline{\mathcal{S}}_{\pi}^{i}} \right) \right] - \mathbb{E}\left[ f(\mathbf{x}) \mid \operatorname{do}\left( \re_{\mathcal{S}_{\pi}^{i}} = e_{\mathcal{S}_{\pi}^{i}} \right) \right] = 0.
\]
Since the marginal contributions \( \Delta_{\pi}^{i} \) are zero for all permutations \( \pi \) and all outgoing edges \( e_i \), the DAG-SHAP attribution for each edge is:
\[
\Psi(e_i) = \frac{1}{|\Pi|} \sum_{\pi \in \Pi} \Delta_{\pi}^{i} = 0.
\]
The total attribution for feature \(k\) is the sum of the attributions of its outgoing edges:
\[
\Phi(k) = \sum_{i\in\mathcal{O}_k} \Psi(e_i) = \sum_{i\in\mathcal{O}_k} 0 = 0.
\]
Since all outgoing edges of feature \( k \) have zero attributions and the feature's attribution is the sum of these, the feature's total attribution is zero. Therefore, DAG-SHAP satisfies the Dummy property.
\end{proof}

\begin{proof}[Proof of Causality]
In any valid topological order \(\pi \in \Pi\), all edges respect the causal constraint: if \(e_j \to e_k\), then \(e_j\) appears before \(e_k\) (\(\pi(j) < \pi(k)\)). Thus, when calculating \(\Psi(e_i)\), all edges in \(\mathcal{S}_{\pi}^i\) have already been intervened upon, ensuring that the causal effects of ancestor nodes are incorporated correctly, avoiding any reversal of causality. The intervention \(\operatorname{do}(\re_{{\underline{\mathcal{S}}}_{\pi}^i}=e_{{\underline{\mathcal{S}}}_{\pi}^i})\) ensures that the model output depends only on the current edge \(e_i\) and the previously intervened edges in \(\mathcal{S}_{\pi}^i\).
By computing the difference in expected model output, the calculation isolates the direct causal effect of \(e_i\), excluding any indirect effects through other paths.
Let the edge \(e_i = (p_i, c_i, \vx_{p_i})\), and its attribution value is defined as:
\[
 \Psi(e_i) = \frac{1}{|\Pi|} \sum_{\pi \in \Pi} \Delta_{\pi}^i,
\]
where \(\Delta_{\pi}^i = \mathbb{E}[f(\rvx)|\operatorname{do}(\re_{{\underline{\mathcal{S}}}_{\pi}^i}=e_{{\underline{\mathcal{S}}}_{\pi}^i})] - \mathbb{E}[f(\rvx)|\operatorname{do}(\re_{\mathcal{S}_{\pi}^i}=e_{\mathcal{S}_{\pi}^i})]\).  
\(\Delta_{\pi}^i\) depends solely on the effect of the edge \(e_i\) under the current intervention, ensuring that it measures only the direct causal contribution of \(e_i\).
DAG-SHAP's attribution process strictly respects causal ordering and isolates the effects of individual edges through edge interventions. The resulting attribution values reflect the direct causal impact of features on the output, thus satisfying the causality property.
\end{proof}

\begin{proof}[Proof of Efficiency]
The sum of attribution values over all edges is:
\[\sum_{e_i \in E} \Psi(e_i) = \sum_{e_i \in E} \sum_{\pi \in \Pi} \frac{1}{|\Pi|} \left[\mathbb{E}[f(\rvx) \mid \operatorname{do}(\re_{{\underline{\mathcal{S}}}_{\pi}^i} = e_{{\underline{\mathcal{S}}}_{\pi}^i})] - \mathbb{E}[f(\rvx) \mid \operatorname{do}(\re_{\mathcal{S}_{\pi}^i} = e_{\mathcal{S}_{\pi}^i})]\right].
\]
Switching the order of summation over \(e_i\) and \(\pi\):
\[\sum_{e_i \in E} \Psi(e_i) = \sum_{\pi \in \Pi} \frac{1}{|\Pi|} \sum_{e_i \in E} \left[\mathbb{E}[f(\rvx) \mid \operatorname{do}(\re_{{\underline{\mathcal{S}}}_{\pi}^i} = e_{{\underline{\mathcal{S}}}_{\pi}^i})] - \mathbb{E}[f(\rvx) \mid \operatorname{do}(\re_{\mathcal{S}_{\pi}^i} = e_{\mathcal{S}_{\pi}^i})]\right].\]
For a given permutation \(\pi\), summing over all edges \(e_i \in E\) creates a telescoping series:
\[
\sum_{e_i \in E} \left[\mathbb{E}[f(\rvx) \mid \operatorname{do}(\re_{{\underline{\mathcal{S}}}_{\pi}^i} = e_{{\underline{\mathcal{S}}}_{\pi}^i})] - \mathbb{E}[f(\rvx) \mid \operatorname{do}(\re_{\mathcal{S}_{\pi}^i} = e_{\mathcal{S}_{\pi}^i})]\right] = \mathbb{E}[f(\rvx) \mid \operatorname{do}(\re_E = e_E)] - \mathbb{E}[f(\rvx)],
\]
where \(\mathbb{E}[f(\rvx) \mid \operatorname{do}(\re_E = e_E)]\) is the model output when all edges are intervened, and \(\mathbb{E}[f(\rvx)]\) is the baseline output when no edges are intervened.
Averaging over all permutations \(\pi \in \Pi\) does not affect the telescoping result because the final and initial terms are the same for every permutation:
\[\sum_{\pi \in \Pi} \frac{1}{|\Pi|} \left[\mathbb{E}[f(\rvx) \mid \operatorname{do}(\re_E = e_E)] - \mathbb{E}[f(\rvx)]\right] = \mathbb{E}[f(\rvx) \mid \operatorname{do}(\re_E = e_E)] - \mathbb{E}[f(\rvx)].
   \]
Thus, the sum of the attribution values for all edges satisfies:
\[\sum_{e_i \in E} \Psi(e_i) = f(\vx) - \mathbb{E}[f(\rvx)].\]
This proves that DAG-SHAP satisfies the Efficiency property because the sum of the attribution values of all points is equal to the sum of the attribution values of all edges. 
\end{proof}

\begin{proof}[Proof of Externality]  
Consider two vertices \(k\) and \(k^\prime\): \(k^\prime\) has a path to \(y\), forming a path \(k^\prime \to y\),  \(k\) has a path \(k \to y\) independent of \(k^\prime\).  When computing the marginal contribution of edges associated with \(k\), the effects of \(k^\prime \to y\) are considered before intervening on \(k \to y\).  
If \(k^\prime \to y\) is included in the intervention set \(\mathcal{S}_{\pi}^i\) before \(k \to y\), then the baseline for \(k \to y\) is conditioned on the effects of \(k^\prime \to y\). 
In DAG-SHAP, all valid permutations \(\pi \in \Pi\) are considered. For permutations where \(k^\prime \to y\) precedes \(k \to y\), the cooperative effect is explicitly reflected in the marginal contribution.
This summation ensures that the attribution value for \(k\) includes benefits from \(k^\prime \to y\) in all cases where \(k^\prime\) precedes \(k\). Therefore, DAG-SHAP satisfies the Externality property.
\end{proof}

\begin{proof}[Proof of Exogeneity]
  In DAG-SHAP, the independent contribution of feature \(k\) is derived by intervening on the edges \(e_i = (k, c_i, \vx_{k})\) where $e_i \in \mathcal{O}_k$ while respecting the causal order defined by the directed acyclic graph (DAG).  
   The attribution value \(\Psi(e_i)\) is calculated as:
   \[
   \Psi(e_i) = \sum_{\pi \in \Pi} \frac{1}{|\Pi|} \left[\mathbb{E}[f(\rvx) \mid \operatorname{do}(\re_{{\underline{\mathcal{S}}}_{\pi}^i} = e_{{\underline{\mathcal{S}}}_{\pi}^i})] - \mathbb{E}[f(\rvx) \mid \operatorname{do}(\re_{\mathcal{S}_{\pi}^i} = e_{\mathcal{S}_{\pi}^i})]\right],
   \]
where \({\underline{\mathcal{S}}}_{\pi}^i = \mathcal{S}_{\pi}^i \cup \{e_i\}\) includes all preceding edges in the permutation \(\pi\) and the current edge \(e_i\), \(\mathcal{S}_{\pi}^i\) includes only the preceding edges.
   The expectation \(\mathbb{E}[\cdot]\) measures the contribution of \(e_i\) under specific interventions, ensuring no confounding from subsequent edges.
   DAG-SHAP adheres to the topological order \(\pi \in \Pi\), ensuring that for any edge \(e_j \in \mathcal{O}_k\), the influences of all ancestor nodes of \(k\) (those connected via paths to \(p_k\)) are already fully propagated before \(e_i\) is intervened upon. This means \(
   \Psi(e_j)\)  reflects only the influence of  $k$.
   By summing over all valid permutations \(\pi \in \Pi\), the attribution value for \(e_j\) accounts for its independent contribution across all possible causal configurations. Nodes not causally connected to \(k\) do not influence \(\Psi(e_j)\) because they are excluded from the intervention sets \(\mathcal{S}_{\pi}^k\) and \({\underline{\mathcal{S}}}_{\pi}^k\). Ancestor nodes' effects are already fully propagated by the time \(e_j\) is intervened upon.
   Thus, DAG-SHAP satisfies the Exogeneity property.
\end{proof}

\section{Comparison of Edge Intervention and Asymmetric Sampling Node Intervention}\label{sec:comparasion_asy_Intervention}
For asymmetric sampling node intervention, we use the following example to explain why it may fail. Let $X_1 = \mathbf{x}_1$, where $\mathbf{x}_1$ is a random variable uniformly distributed on $[0,1]$, representing the exogenous influence of $X_1$; $X_2 = X_1 \cdot \mathbf{x}_2$, where $\mathbf{x}_2$ is another random variable uniformly distributed on $[0,1]$, representing the exogenous influence of $X_2$. The generation of $Y$ follows $Y = X_1 \cdot X_2$. In summary, $X_1$ directly influences $Y$ and indirectly influences $Y$ through $X_2$. $X_2$ influences $Y$ with its own exogenous influence $\mathbf{x}_2$ and transfers the indirect influence of $X_1$. We aim to attribute values to each feature of a specific explained input $x^* = [x_1^*, x_2^*] = [1, 1]$ with respect to the baseline [0, 0].

For asymmetric sampling node interventions, the only valid sample permutation is $(x_1^*, x_2^*)$. The marginal contributions of $x_1^*$ and $x_2^*$ in the permutation $(x_1^*, x_2^*)$ are shown in the Table~\ref{tab:asym_node_marginal}.

\begin{table}[htbp]
\centering
\caption{Asymmetric-sampling node interventions: marginal contributions under the only valid ordering $(x_1^*,x_2^*)$.}
\label{tab:asym_node_marginal}
\setlength{\tabcolsep}{5pt}
\renewcommand{\arraystretch}{1.15}
\begin{tabularx}{\columnwidth}{@{}C{0.30\columnwidth}Y@{}}
\toprule
\textbf{Step (ordering)} & \textbf{Marginal contribution (with calculation)} \\
\midrule
$x_1^*$ in $(x_1^*,x_2^*)$ &
$\mathbb{E}\!\left[\mathbf{Y}\mid \operatorname{do}\!\left(\mathbf{x}_{\{1\}}=x_{\{1\}}^*\right)\right]
\allowbreak-\allowbreak
\mathbb{E}\!\left[\mathbf{Y}\mid \operatorname{do}(\emptyset)\right]
\allowbreak=\allowbreak
\frac{1}{2}\allowbreak-\allowbreak 0\allowbreak=\allowbreak \frac{1}{2}$ \\
$x_2^*$ in $(x_1^*,x_2^*)$ &
$\mathbb{E}\!\left[\mathbf{Y}\mid \operatorname{do}\!\left(\mathbf{x}_{\{1,2\}}=x_{\{1,2\}}^*\right)\right]
\allowbreak-\allowbreak
\mathbb{E}\!\left[\mathbf{Y}\mid \operatorname{do}\!\left(\mathbf{x}_{\{1\}}=x_{\{1\}}^*\right)\right]
\allowbreak=\allowbreak
1\allowbreak-\allowbreak \frac{1}{2}\allowbreak=\allowbreak \frac{1}{2}$ \\
\bottomrule
\end{tabularx}
\end{table}

Thus, the attribution value assigned to $x_1^*$ by the asymmetric sampling node intervention is $1/2$, and the attribution value assigned to $x_2^*$ is also $1/2$.

For edge intervention, we denote the edge $X_1 \to X_2$ as $\mathbf{e}_1$, $X_1 \to Y$ as $\mathbf{e}_2$, and $X_2 \to Y$ as $\mathbf{e}_3$. We denote the edges of the instance $x^*$ as $e_1^*, e_2^*, e_3^*$. According to the definition of DAG-SHAP , there are three valid edge permutations: $(e_1^*, e_2^*, e_3^*)$, $(e_1^*, e_3^*, e_2^*)$, and $(e_2^*, e_1^*, e_3^*)$. The marginal contributions of each edge in these permutations are as shown in Table~\ref{tab:edge_intervention_marginals_asym}.

\begin{table}[htbp]
\centering
\caption{Edge interventions (DAG-SHAP): marginal contributions under all valid edge permutations.}
\label{tab:edge_intervention_marginals_asym}
\setlength{\tabcolsep}{5pt}
\renewcommand{\arraystretch}{1.15}
\begin{tabularx}{\columnwidth}{@{}C{0.30\columnwidth}Y@{}}
\toprule
\textbf{Edge and permutation} & \textbf{Marginal contribution (with calculation)} \\
\midrule
$e_1^*$ in $(e_1^*,e_2^*,e_3^*)$ &
$\mathbb{E}\!\left[\mathbf{Y}\mid \operatorname{do}\!\left(\mathbf{e}_{\{1\}}=e_1^*\right)\right]
\allowbreak-\allowbreak
\mathbb{E}\!\left[\mathbf{Y}\mid \operatorname{do}(\emptyset)\right]
\allowbreak=\allowbreak 0\allowbreak-\allowbreak 0\allowbreak=\allowbreak 0$ \\

$e_1^*$ in $(e_1^*,e_3^*,e_2^*)$ &
$\mathbb{E}\!\left[\mathbf{Y}\mid \operatorname{do}\!\left(\mathbf{e}_{\{1\}}=e_1^*\right)\right]
\allowbreak-\allowbreak
\mathbb{E}\!\left[\mathbf{Y}\mid \operatorname{do}(\emptyset)\right]
\allowbreak=\allowbreak 0\allowbreak-\allowbreak 0\allowbreak=\allowbreak 0$ \\

$e_1^*$ in $(e_2^*,e_1^*,e_3^*)$ &
$\mathbb{E}\!\left[\mathbf{Y}\mid \operatorname{do}\!\left(\mathbf{e}_{\{1,2\}}=e_{\{1,2\}}^*\right)\right]
\allowbreak-\allowbreak
\mathbb{E}\!\left[\mathbf{Y}\mid \operatorname{do}\!\left(\mathbf{e}_{\{2\}}=e_2^*\right)\right]
\allowbreak=\allowbreak \frac{1}{2}\allowbreak-\allowbreak 0\allowbreak=\allowbreak \frac{1}{2}$ \\

$e_2^*$ in $(e_1^*,e_2^*,e_3^*)$ &
$\mathbb{E}\!\left[\mathbf{Y}\mid \operatorname{do}\!\left(\mathbf{e}_{\{1,2\}}=e_{\{1,2\}}^*\right)\right]
\allowbreak-\allowbreak
\mathbb{E}\!\left[\mathbf{Y}\mid \operatorname{do}\!\left(\mathbf{e}_{\{1\}}=e_1^*\right)\right]
\allowbreak=\allowbreak \frac{1}{2}\allowbreak-\allowbreak 0\allowbreak=\allowbreak \frac{1}{2}$ \\

$e_2^*$ in $(e_1^*,e_3^*,e_2^*)$ &
$\mathbb{E}\!\left[\mathbf{Y}\mid \operatorname{do}\!\left(\mathbf{e}_{\{1,2,3\}}=e_{\{1,2,3\}}^*\right)\right]
\allowbreak-\allowbreak
\mathbb{E}\!\left[\mathbf{Y}\mid \operatorname{do}\!\left(\mathbf{e}_{\{1,3\}}=e_{\{1,3\}}^*\right)\right]
\allowbreak=\allowbreak 1\allowbreak-\allowbreak 0\allowbreak=\allowbreak 1$ \\

$e_2^*$ in $(e_2^*,e_1^*,e_3^*)$ &
$\mathbb{E}\!\left[\mathbf{Y}\mid \operatorname{do}\!\left(\mathbf{e}_{\{2\}}=e_2^*\right)\right]
\allowbreak-\allowbreak
\mathbb{E}\!\left[\mathbf{Y}\mid \operatorname{do}(\emptyset)\right]
\allowbreak=\allowbreak 0\allowbreak-\allowbreak 0\allowbreak=\allowbreak 0$ \\

$e_3^*$ in $(e_1^*,e_2^*,e_3^*)$ &
$\mathbb{E}\!\left[\mathbf{Y}\mid \operatorname{do}\!\left(\mathbf{e}_{\{1,2,3\}}=e_{\{1,2,3\}}^*\right)\right]
\allowbreak-\allowbreak
\mathbb{E}\!\left[\mathbf{Y}\mid \operatorname{do}\!\left(\mathbf{e}_{\{1,2\}}=e_{\{1,2\}}^*\right)\right]
\allowbreak=\allowbreak 1\allowbreak-\allowbreak \frac{1}{2}\allowbreak=\allowbreak \frac{1}{2}$ \\

$e_3^*$ in $(e_1^*,e_3^*,e_2^*)$ &
$\mathbb{E}\!\left[\mathbf{Y}\mid \operatorname{do}\!\left(\mathbf{e}_{\{1,3\}}=e_{\{1,3\}}^*\right)\right]
\allowbreak-\allowbreak
\mathbb{E}\!\left[\mathbf{Y}\mid \operatorname{do}\!\left(\mathbf{e}_{\{1\}}=e_1^*\right)\right]
\allowbreak=\allowbreak 0\allowbreak-\allowbreak 0\allowbreak=\allowbreak 0$ \\

$e_3^*$ in $(e_2^*,e_1^*,e_3^*)$ &
$\mathbb{E}\!\left[\mathbf{Y}\mid \operatorname{do}\!\left(\mathbf{e}_{\{1,2,3\}}=e_{\{1,2,3\}}^*\right)\right]
\allowbreak-\allowbreak
\mathbb{E}\!\left[\mathbf{Y}\mid \operatorname{do}\!\left(\mathbf{e}_{\{1,2\}}=e_{\{1,2\}}^*\right)\right]
\allowbreak=\allowbreak 1\allowbreak-\allowbreak \frac{1}{2}\allowbreak=\allowbreak \frac{1}{2}$ \\
\bottomrule
\end{tabularx}
\end{table}

Thus, the attribution value assigned to $x_1^*$ by DAG-SHAP is $(0+0+1/2)/3 + (1/2+1+0)/3 = 2/3$,and the attribution value assigned to $x_2^*$ is $(1/2+0+1/2)/3 = 1/3$. As $x_1^*=1$ directly influences $Y$ through $X_2$ and the interaction $Y = X_1 \cdot X_2$, it is evident that $x_1^*=1$ is more important than $x_2^*$. Therefore, the asymmetric sampling node intervention provides incorrect attribution, as it fails to account for the external contribution of $x_1^* = 1$.

\end{document}